\documentclass{article}
\usepackage{amssymb,bm}
\usepackage{amsmath}
\usepackage{amsthm}
\usepackage{amsfonts}
\usepackage{mathrsfs}
\usepackage{appendix}
\usepackage{hyperref}
\usepackage{authblk}
\usepackage{color}

\usepackage{mathtools}

\newtheorem{thm}{Theorem}[section]

\newtheorem{theorem}{Theorem}[section]
\newtheorem{lemma}[theorem]{Lemma}

\theoremstyle{remark}

\usepackage{graphics}
\usepackage{graphicx,subcaption}
\usepackage{algorithm}
\usepackage{algorithmic}
\hypersetup{colorlinks,linkcolor={blue},citecolor={blue},urlcolor={red}}

\usepackage{cite}
\usepackage[top=2.5cm, bottom=2.5cm, left=2.5cm, right=2.5cm]{geometry}

\usepackage{threeparttable}
\usepackage{float}
\floatstyle{plaintop}
\restylefloat{table}
\def\var{\rm{var}}

\floatstyle{plain}
\restylefloat{figure}
\usepackage{placeins}






\newcommand{\mcP}{\mathcal{P}}
\newcommand{\mcL}{\mathcal{L}}
\newcommand{\mcN}{\mathcal{N}}

\newcommand{\mbR}{\mathbb{R}}
\newcommand{\mbE}{\mathbb{E}}
\newcommand{\mbP}{\mathbb{P}}

\newcommand{\tr}{\rm{Tr}}

\date{}
\begin{document}

\title{Derivation of Information-Theoretically Optimal Adversarial Attacks with Applications to Robust Machine Learning}

\author{
Jirong Yi,  
Raghu Mudumbai, Weiyu Xu {\footnote{
Department of Electrical and Computer Engineering, University of Iowa, Iowa City, IA 52242. Corresponding email: weiyu-xu@uiowa.edu. This work was partially presented in the ECE department seminar of Iowa State University (November 22nd, 2019), and  Transdisciplinary Research Workshop in Principles of Data Science at University of Iowa (July 17th, 2020).}}\,}\,

\maketitle
\begin{abstract}

We consider the theoretical problem of designing an optimal adversarial attack on a decision system that maximally degrades the achievable performance of the system as measured by the mutual information between the degraded signal and the label of interest. This problem is motivated by the existence of adversarial examples for machine learning classifiers. By adopting an information theoretic perspective, we seek to identify conditions under which adversarial vulnerability is unavoidable i.e. even optimally designed classifiers will be vulnerable to small adversarial perturbations. We present derivations of the optimal adversarial attacks for discrete and continuous signals of interest, i.e., finding the optimal perturbation distributions to minimize the mutual information between the degraded signal and a signal following a continuous or discrete distribution. In addition, we show that it is much harder to achieve adversarial attacks for minimizing mutual information when multiple redundant copies of the input signal are available. This provides additional support to the recently proposed ``feature compression" hypothesis as an explanation for the adversarial vulnerability of deep learning classifiers. We also report on results from computational experiments to illustrate our theoretical results.

\end{abstract}


\section{Introduction}\label{Sec:Introduction}

Deep learning methods have revolutionized many data processing applications that had previously been considered intractable such as computer vision, natural language processing and speech recognition \cite{goodfellow_deep_2016,nitin_bhagoji_practical_2018,du_towards_2018,cheng_query-efficient_2018,bubeck_adversarial_2018-1,bubeck_adversarial_2018}. However, deep learning systems have been shown to be vulnerable to {\it adversarial attacks} \cite{szegedy_intriguing_2013,goodfellow_explaining_2014,carlini_towards_2017,kos_adversarial_2018,xie_information-theoretic_2019,chelombiev_adaptive_2019,uesato_are_2019,yi_trust_2019,wiyatno_adversarial_2019,schmidt_adversarially_2018,szegedy_intriguing_2013,papernot_distillation_2016,kurakin_adversarial_2016,shafahi_universal_2018}. Specifically, it has been shown that the outputs of many deep learning systems can be manipulated with imperceptibly small perturbations applied to the inputs \cite{lai_adversarial_2020,chen_boundary_2019,andriushchenko_square_2019,kos_adversarial_2018,schmidt_adversarially_2018,madry_towards_2017}. We will use the term ``adversarial fragility'' to describe this vulnerability to adversarial attacks.

A common theme in the previous literature is to explain adversarial fragility as a consequence of some deficiency in practical machine learning systems with the implication that this fragility can be eliminated by modifying the design of the system to remove the deficiency \cite{kos_adversarial_2018,schmidt_adversarially_2018,madry_towards_2017,fawzi_adversarial_2018}. In this paper, we take a different view: we consider estimation or classification problems where even the {\it theoretically optimal decision system} is vulnerable to adversarial attack. Thus we treat adversarial fragility as a function of the statistics of the decision problem rather than as an artifact of suboptimal design.

We define adversarial attacks as perturbations that maximally degrade the information contained in an input signal as measured by the mutual information between the signal and the label (or quantity) of interest, a first information-theoretically optimal attack. Note that this definition {\it makes no reference to a particular machine learning system}. One consequence of this definition is that adversarial attacks are inherently {\it transferable}: the reduction of mutual information caused by the adversarial attack degrades the performance of any conceivable decision system that relies on that input signal to the system. While adversarial attacks have been empirically shown to transfer among different machine learning systems, such transferability is entirely unintentional and the phenomenon is not well-understood.

With our definition, we can generalize the tools of rate-distortion theory \cite{cover_elements_2012} to  study adversarial attacks. Rate distortion theory is a branch of information theory that studies the amount of information loss that can be caused by a certain amount of distortion to signal of interest,  with applications to data compression. The efficiency of the data compression is maximized when the signal distortion is chosen in such a way to maximize the resulting information loss, thus effectively reducing the amount of information that must be preserved. It turns out that maximizing information loss is also a good way to model what an adversarial attacker on a decision system would seek to do. Thus the rate distortion theory provides a natural mathematical framework to study adversarial attacks. However, different from problem settings in traditional rate distortion theory, the adversarial attackers are not directly modifying the label (or quantity) of interest, but instead are modifying data generated from or related to the label (or quantity) of interest. 

We present a formal definition of adversarial attacks on classification or estimation algorithms as a generalization of the classic rate distortion problem. We then present derivations of the optimal adversarial attacks on two classic decision problems with constraints on the attack size: (a) estimating a vector of continuous random variables each of which represents an independent observation of a common source variable with an Gaussian prior distribution, and (b) estimating discrete  random variables. We also consider a variant of this problem where the adversarial attacks are limited to attacking only a subset of the variables being estimated.

One important finding from our results is that adversarial attacks are significantly less effective when a large number of independent observations of the source variable are available. In other words, when there is redundant information about the source data available to the decision algorithm, it can be much more robust to attacks. Conversely, adversarial attacks can have significantly more dramatic effects when redundant observations of the source variable are not available. This lends support to our recently proposed ``feature compression'' hypothesis \cite{yi_trust_2019,xie_information-theoretic_2019} as an explanation for the adversarial fragility of deep learning systems. Under this hypothesis, deep learning systems are vulnerable to adversarial attacks because they compress their data into a minimal number of features that contain enough information about the source data to allow for sufficiently accurate classification under no adversarial attacks. In other words, under the ``feature compression'' hypothesis, deep learning systems are systematically blind to all non-essential features even if they may contain information relevant to the source label because this information is redundant. Such compression is useful for generalization, but at the cost of robustness: it makes the system vulnerable to attacks that are narrowly targeted at the minimal features it relies on. This intuition is confirmed by our results in this paper.

Our main contributions are summarized as follows.

\begin{itemize}

\item We present a general and formal definition of adversarial attacks as an constrained optimization problem for maximizing the information loss for a given attack size, giving an information-theoretically optimal attack. 

\item We show that the optimal adversarial attack problem is a generalization of the classical rate-distortion problem, and present characterizations of the optimal solution for both binary and Gaussian decision problems.

\item We show that adversarial attacks are significantly harder against decision systems whose inputs contain many redundant copies of information.

\end{itemize}

This paper is organized as follows. In Section \ref{sec:problemformulation}, we formally formulate our problem model in a general setting which is further specified under different scenarios in Section \ref{Sec:L2MetricAttack} and \ref{Sec:L1MetricAttack}. We give the main theoretical results in these two sections, and present experimental results for supporting our theoretical results in Section \ref{Sec:ExperimentalResults}.

{\bf Notations:} We denote the distribution of a random variable by $p(X)$, and the probability of a realization of random variable $X$, i.e., $X=x$, by $P(X=x)$. The sample space of a random variable $X$ is denoted by $\Omega_X$. We use $G_{ij}$ to denote the element of a matrix $G$ in the $i$-th row and $j$-th column. The $\tr(F)$ refers to the trace of $F$.

\section{Problem Formulation}
\label{sec:problemformulation}

We use random variable $U$  to denote the quantity (or label) of interest, and use random variable $X\in\mbR$ to denote the data generated  (or called ``observation synthesized'') from $U$. We are interested in adding random perturbation $E$ to $X$, producing random variable $Y=X+E$, such that the mutual information between $U$ and $Y=X+E$ is minimized. Since adding perturbations often introduces costs, we put constraints on the perturbation $E$ . For example, to make the perturbation less perceptible, one can require the perturbation to be smaller than $\epsilon$ in the expectation of its $\ell_{2}$ norm. The overall framework of our proposed model is illustrated in Figure \ref{Fig:RateDistorAdvAttack}.

\begin{figure}[!htb]
\centering
\includegraphics[width=\textwidth]{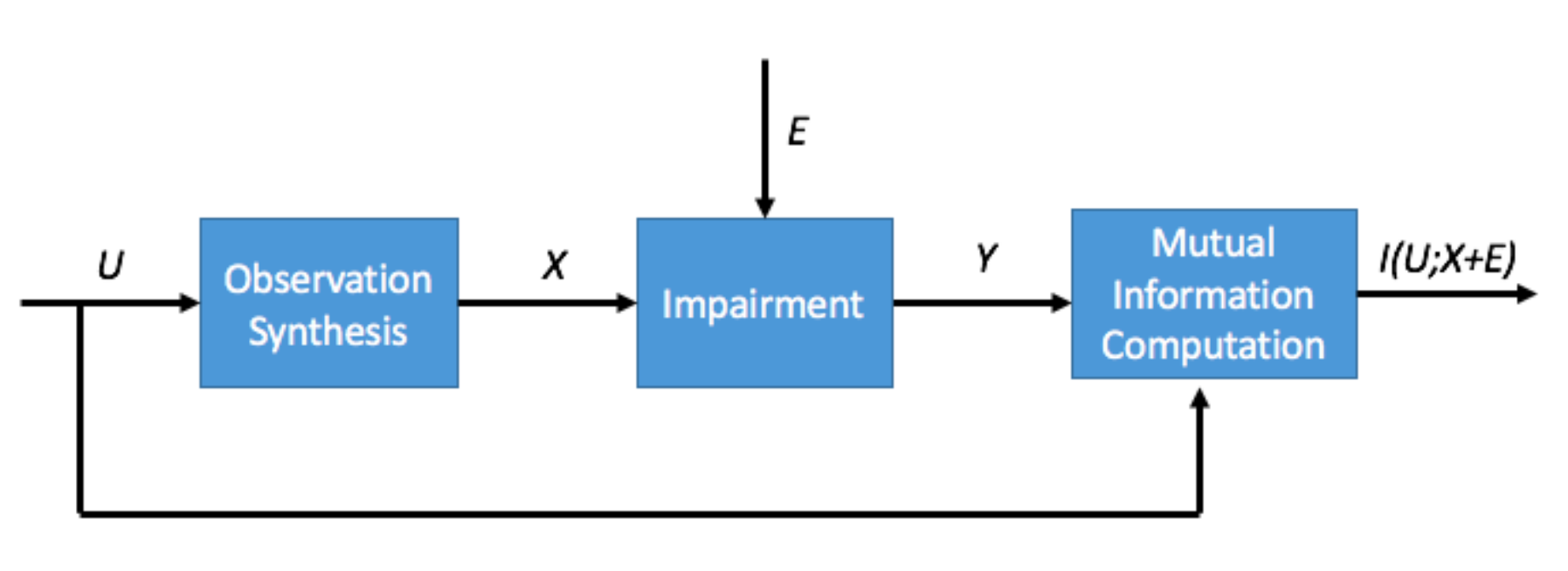}
\caption{Mutual information-based framework for adversarial attacks. A data-generating (observation synthesis) process gives $X=U+W$ where $W$ characterizes the noise.}\label{Fig:RateDistorAdvAttack}
\end{figure}

Let $(U,X)$ and $(U,X,E)$ be jointly distributed according to distributions $p(U,X)$ and $p(U,X,E)$, respectively. We can formulate the problem as
\begin{align}\label{Defn:MutualInfoMinimization}
\min_{p(E|U,X) \in \mathcal{P}} I(U; X+E)
\end{align}
where $\mathcal{P}$ is a set of admissible probability distributions. For example, if we want to restrict the perturbation in terms of average $\ell_{2}$ norm being smaller than a certain threshold $\epsilon$, we have 

\begin{align}\label{Defn:AdmissibleProbDist}
\mcP:=\{p(E| U, X): \mbE[\|E\|^2]\leq \epsilon, X\in\Omega\}.
\end{align}

\begin{lemma}\label{Lem:OptimizationConvexity}
If $\mcP$ is a convex set, the optimization problem (\ref{Defn:MutualInfoMinimization}) is a convex optimization problem.
\end{lemma}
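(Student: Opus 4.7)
The plan is to exploit two standard ingredients: (i) the joint convexity of the Kullback-Leibler divergence, and (ii) the fact that the law of total probability makes the relevant induced distributions affine functions of the decision variable $p(E|U,X)$. Since $p(U,X)$ is part of the problem data, both the marginal $p(U)$ and the conditional $p(X|U)$ are fixed; the only quantity being optimized over is $p(E|U,X)$.

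First, I would write out, using $Y = X+E$,
\begin{align*}
p(U=u, Y=y) = \int p(U=u, X=x)\, p_{E|U,X}(y-x \mid u, x)\, dx,
\end{align*}
which is a linear map from $p(E|U,X)$ to $p(U,Y)$. Marginalizing $u$ out yields an analogous linear map to $p(Y)$, so $p(U)\,p(Y)$ is likewise affine in $p(E|U,X)$ because $p(U)$ is a fixed constant determined by the given joint $p(U,X)$.

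Second, I would use the identity $I(U;Y) = D\bigl(p(U,Y) \,\|\, p(U)p(Y)\bigr)$ and invoke the joint convexity of $D(\cdot \,\|\, \cdot)$ in its two arguments. Since both arguments are affine functions of the optimization variable, their composition with a jointly convex map is convex, so $I(U; X+E)$ is a convex function of $p(E|U,X)$. Combined with the hypothesis that $\mathcal{P}$ is convex, we are minimizing a convex function over a convex set, which is by definition a convex optimization problem.

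The only real obstacle is careful bookkeeping: one must verify the affine dependence carefully and adjust sums versus integrals in the discrete or continuous settings, but no new tools beyond standard information-theoretic identities are required. An alternative route would decompose $I(U;Y) = H(Y) - H(Y|U)$, but this produces a difference of concave quantities and obscures convexity, which is why the KL-divergence formulation is the cleaner path.
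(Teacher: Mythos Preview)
Your argument is correct. Both your proof and the paper's rest on the same underlying idea: the objective depends on $p(E\mid U,X)$ only through an affine image, and mutual information is convex in that image. The paper, however, takes a shorter route: it simply invokes the textbook fact (Cover and Thomas) that $I(U;Y)$ is convex in the channel $p(Y\mid U)$ for fixed $p(U)$, and then observes that convexity of $\mathcal{P}$ induces convexity of the set of channels $p(X+E\mid U)$ via the (implicit) linear map $p(E\mid U,X)\mapsto p(X+E\mid U)$. You instead unpack that textbook fact from first principles, writing $I(U;Y)=D\bigl(p(U,Y)\,\|\,p(U)p(Y)\bigr)$ and using joint convexity of the KL divergence together with the affinity of both arguments in $p(E\mid U,X)$. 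Your route is more self-contained and makes the affine dependence explicit; the paper's is terser but leans on a citation. They are equivalent in substance.
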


\begin{proof}
	(of Lemma \ref{Lem:OptimizationConvexity}) We will show that the $I(U;X+E)$ is convex with respect to $p(E|U,X)$.  From \cite{cover_elements_2012}, we know $I(U;X+E)$ is convex with respect to $p(X+E|U)$.  We can see that, if $\mcP$ is convex, the set of conditional distributions for $p(X+E|U)$ is also convex.  This proves that the optimization problem (\ref{Defn:MutualInfoMinimization}) is a convex optimization problem.
	  
\end{proof}

The convexity of the optimization problem (\ref{Defn:MutualInfoMinimization}) opens the door to efficiently calculating the information-theoretically attack

\section{Adversarial Attack for Full Set of Random Variables}\label{Sec:L2MetricAttack}

In this section, we will consider the problem \eqref{Defn:MutualInfoMinimization} in scenarios where $U$ follows a continuous distribution, or a discrete distribution.

\subsection{The Case of Gaussian Random Variables}\label{Sec:GeneralGaussian}

We first consider the problem \eqref{Defn:MutualInfoMinimization} for random variables following continuous distributions, i.e., finding optimal adversarial perturbation random variable $E$ to minimize the mutual information between $U$ and $X+E$.  The main result is presented in Theorem \ref{Thm:SingleGaussianVar}.

\begin{thm}\label{Thm:SingleGaussianVar}
Suppose that random variable $U\in\mbR$ follows Gaussian distribution $\mathcal{N}(0, a^{2})$. Let random variable $W$ follow Gaussian distribution $\mathcal{N}(0, \sigma^{2})$, and be independent of $U$.  The data $X=U+W$. Let $D$ be a non-negative number. 
Under the constraint that the perturbation $E$ satisfies $\mathbb{E} \{E^2\} \leq D$, the optimal perturbation $E$ that minimizes the mutual information $I(U;U+W+E)$, is jointly Gaussian distributed with $U$ and $W$, and has zero mean. Moreover, under the optimal distribution, the covariance matrix of $(U, W, E)$ is given by
\begin{align}
F=\begin{bmatrix}
  a^{2}& 0 & x\\
0& \sigma^{2}  &y\\
x& y& D\\
 \end{bmatrix},
 \end{align}
where ($x$, $y$) are the optimal solution to the following optimization problem (\ref{Defn:OptimizationforGaussian}), i.e.,
\begin{align}\label{Defn:OptimizationforGaussian}
&\min_{x,~y} ~~~~~~\frac{(a^2+x)^{2}}{a^2(a^{2}+\sigma^{2}+D+2y+2x)},\nonumber\\
&{\rm s.t.}~~~~~~  \frac{x^{2}}{a^{2}}+\frac{y^{2}}{\sigma^{2}} \leq D.
\end{align}
In addition, the smallest achievable mutual information is given by $-\frac{1}{2} \log\left(1-\frac{(a^2+x)^{2}}{a^2(a^{2}+\sigma^{2}+D+2y+2x)}\right)$.
\end{thm}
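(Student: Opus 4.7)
The plan is to prove Theorem \ref{Thm:SingleGaussianVar} in three reductions, progressively restricting the class of candidate perturbations until only a finite-dimensional optimization remains.

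First, I would observe that mutual information is invariant under translations of its second argument, so $I(U;\,U+W+E) = I(U;\,U+W+(E-\mbE[E]))$. Since $\mbE[(E-\mbE[E])^{2}] = \mbE[E^{2}] - (\mbE[E])^{2} \leq D$, the translation can only tighten (or preserve) the constraint. Hence without loss of generality $\mbE[E]=0$ and $\mathrm{Var}(E) \leq D$.

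Second, for an arbitrary admissible $E$, let $F$ denote the covariance matrix of $(U,W,E)$, parametrized by $x := \mathrm{Cov}(U,E)$, $y := \mathrm{Cov}(W,E)$, $z := \mathrm{Var}(E)$. Let $(U^{*},W^{*},E^{*})$ be jointly Gaussian with the same covariance $F$, and set $Y = U+W+E$, $Y^{*}=U^{*}+W^{*}+E^{*}$. Because $U$ is itself Gaussian, the chain $h(U\mid Y) = h(U-\alpha Y \mid Y) \leq h(U-\alpha Y) \leq \tfrac{1}{2}\log(2\pi e\,\mathrm{Var}(U-\alpha Y))$, applied with $\alpha = \mathrm{Cov}(U,Y)/\mathrm{Var}(Y)$ (the linear MMSE coefficient), yields $h(U\mid Y) \leq h(U^{*}\mid Y^{*})$, since $\mathrm{Var}(U-\alpha Y)$ depends only on $F$ and all three inequalities become equalities in the jointly Gaussian case. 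Subtracting both sides from $h(U)$, which is the same in both cases, gives $I(U;Y) \geq I(U^{*};Y^{*})$. So the infimum is attained by a zero-mean $E$ that is jointly Gaussian with $(U,W)$.

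Third, I would minimize over the remaining parameters $(x,y,z)$. Positive semidefiniteness of $F$ is equivalent to $x^{2}/a^{2}+y^{2}/\sigma^{2} \leq z$, and the size budget reads $z \leq D$. A direct computation gives $\mathrm{Cov}(U,Y) = a^{2}+x$, $\mathrm{Var}(Y) = a^{2}+\sigma^{2}+z+2x+2y$, and the closed form $I(U;Y) = -\tfrac{1}{2}\log\bigl(1 - (a^{2}+x)^{2}/[a^{2}(a^{2}+\sigma^{2}+z+2x+2y)]\bigr)$. The argument of the logarithm is strictly decreasing in $z$ (the numerator is independent of $z$, the denominator linear and positive in $z$), so the optimum sits at $z=D$, yielding the stated $F$ and reducing the problem to the two-variable optimization (\ref{Defn:OptimizationforGaussian}) in $(x,y)$. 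The claimed expression for the minimum mutual information follows by direct substitution.

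The main obstacle is the second reduction: the usual Gaussian maximum-entropy principle gives upper bounds on entropies, whereas here one wants a lower bound on the difference $I(U;Y) = h(U) - h(U\mid Y)$. The argument works precisely because $h(U)$ is a fixed constant (as $U$ is already Gaussian), so the linear-MMSE decomposition can convert a Gaussian entropy upper bound on $h(U\mid Y)$ into the desired lower bound on $I(U;Y)$, with equality in the Gaussian case. Once this reduction is in place, Lemma \ref{Lem:OptimizationConvexity} certifies that any local minimizer of (\ref{Defn:OptimizationforGaussian}) produced by the Gaussian covariance parametrization is globally optimal.
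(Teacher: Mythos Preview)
Your proposal is correct and follows essentially the same approach as the paper: both use the linear-MMSE decomposition $Z=U-\alpha Y$ together with the Gaussian maximum-entropy bound $h(U\mid Y)=h(Z\mid Y)\le h(Z)\le h(Z_g)$ to show that a jointly Gaussian $(U,W,E)$ attains the infimum, and then reduce the PSD constraint on $F$ via the Schur complement to $x^2/a^2+y^2/\sigma^2\le D$. Your write-up is slightly more careful than the paper's in that you explicitly justify the zero-mean reduction and the saturation $\mathrm{Var}(E)=z=D$ (the paper simply places $D$ on the diagonal of $F$ from the outset); your final remark invoking Lemma~\ref{Lem:OptimizationConvexity} for global optimality of (\ref{Defn:OptimizationforGaussian}) is not needed and not quite apt, since that lemma concerns convexity in $p(E\mid U,X)$ rather than in the covariance parameters $(x,y)$.
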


\begin{proof}
(of Theorem \ref{Thm:SingleGaussianVar}) Let us assume that $F$ is the covariance matrix of $(U,W, E)$, then the covariance matrix of $(U+W+E, U)$ is given by
\begin{align}
G=\begin{bmatrix}
  a^{2}+\sigma^{2}+D+2y+2x & a^{2}+x\\
a^{2}+x& a^{2}
 \end{bmatrix}.
 \end{align}

 Define $Z=U-\frac{G_{21}}{G_{11}} (U+W+E)$. Then the variance of $Z$ is given by ${\var}(Z)= G_{22}- \frac{G_{21}}{G_{11}} G_{12}$. Then we have the following argument:
 \begin{align*}
I(U+W+E;U)
& =h(U)-h(U|U+W+E)\\
 & = h(U)-h(Z|U+W+E)\\
 & \geq h(U)-h(Z) \\
 &\geq h(U)-h(Z_{g})\\
 &=\frac{1}{2} \log(2\pi ea^{2})-\frac{1}{2} \log(  2\pi e (G_{22}- G_{21} G_{11}^{-1} G_{12}  )        )\\
 &=\frac{1}{2} \log( 2\pi e a^{2})
 - \frac{1}{2} \log\left(  2\pi e \left(a^{2}-\frac{(a^{2}+x)^{2}   } {a^{2}+\sigma^{2}+D+2y+2x}  \right)\right)\\
 &=-\frac{1}{2} \log\left(1-   \frac{(a^2+x)^{2}}{a^2(a^{2}+\sigma^{2}+D+2y+2x)} \right),
 \end{align*}
 where $Z_{g}$ is a Gaussian random variable with the same mean and variance as those of $Z$, and we used the fact that the maximum value of $h(Z)$ is achieved when $Z$ follows the Gaussian distribution.

Moreover, when $(U,V,W)$ are zero-mean jointly Gaussian with covariance matrix $F$, the mutual information between $I(U;U+W+E)$ can achieve $-\frac{1}{2} \log\left(1-   \frac{(a^2+x)^{2}}{a^2(a^{2}+\sigma^{2}+D+2y+2x)}\right)$.
The smallest achievable value of $I(U;U+W+E)$ is thus given by the following optimization problem:
\begin{align}\label{Eq:IntermediateOptimization}
&\min_{x,~y} ~~~~~~-\frac{1}{2} \log\left( 1-   \frac{(a^2+x)^{2}}{a^2(a^{2}+\sigma^{2}+D+2y+2x)}\right),\\
&{\rm s.t.}~~~~~~\begin{bmatrix}
  a^{2}& 0 & x\\
0& \sigma^{2}  &y\\
x& y& D
 \end{bmatrix}\succcurlyeq 0 .
\end{align}

By the Schur complement, the condition that $F\succcurlyeq 0$ is equivalent to $\frac{x^{2}}{a^{2}}+\frac{y^{2}}{\sigma^{2}} \leq D$. From the monotonicity of $-\frac{1}{2}\log(1-x)$ in $x$, the solution minimizing the objective function in \eqref{Eq:IntermediateOptimization} will also be the minimizer of the objective function \eqref{Defn:OptimizationforGaussian}. 
\end{proof}

Theorem \eqref{Thm:SingleGaussianVar} characterizes the optimal adversarial perturbation $E$ for minimizing the mutual information between $U+W+E$ and the Gaussian random variable $U$, i.e., the optimal $E$ must follow a joint Gaussian distribution with source $U$ and random noise $W$. In addition, it provides a feasible way to compute the distribution of $E$, and to compute the minimum achievable mutual information achieved, i.e., by solving the optimization problem \eqref{Defn:OptimizationforGaussian}.

\subsection{Linear Projections of Gaussian Random Variables}\label{Sec:GaussianWithProjection}

In this section, we consider the case where the quantity of interest is a Gaussian random vector $U\in\mbR^m$, and the data generating process is modeled by linear projections.  We show that as the dimensionality of the projected space increases, the minimized mutual information under a given attack budget can increase. Since the linear projection can be interpreted as creating multiple copies of the original source $U$, this implies that adding redundant copies better preserves the mutual information under adversarial attacks, and makes it harder for the attacker to perform adversarial attacks. The result is presented in Theorem \ref{Thm:GaussianMultipleCopies}.

\begin{thm}\label{Thm:GaussianMultipleCopies}
Let  $U_{1}$, $U_{2}$, $\cdots$, and $U_{m}\in\mbR$ be $m$ independent Gaussian random variables each following Gaussian distribution $\mathcal{N} (0,1)$.  Let $X=H U$, where $H \in \mathbb{R}^{n \times m}$  is a given matrix, and $U=[U_1\ U_2\ \cdots\ U_m]^T$. Let $Y=H U +E$ be the input data to decision systems, where $E$ is the perturbation.   Let  $k$ be the rank of $H$, and let $\sigma_{1}$, $\sigma_{2}$,..., and $\sigma_{k}$ be the singular values of matrix $H$. Suppose $H$ has a singular value decomposition  $H=Q C V^{T}$, where $Q \in \mathbb{R}^{n \times k}$,  $C \in \mathbb{R}^{k \times k}$,  $V \in \mathbb{R}^{m \times k}$, and $C$ is a diagonal matrix whose diagonal elements are $\sigma_{1}$, $\sigma_{2}$,..., and $\sigma_{k}$.

 Consider a perturbation budget $D\leq \sum_{i=1}^{k} \sigma_{i}$ such that $\mbE[\|E\|^{2}]\leq D $. Then there exists a positive number $\tau$ such that
 $$
 D_{i}=
 \begin{cases}
 \sigma_{i}^2, \sigma_{i}^2 \leq \tau, \\
 \tau, \sigma_{i}^{2} \geq \tau
 \end{cases},
 \quad \sum_{i=1}^{k}  D_{i}=D.
 $$
The smallest objective value (mutual information) of optimization problem(\ref{Defn:MutualInfoMinimization}) is given by $$I(U;Y) = \frac{1}{2}\sum_{i=1}^{k} \log\left(\frac{ \sigma_{i}^{2}}{D_{i}} \right)$$. An optimal perturbation $E^{*}$ which achieves this smallest mutual information $I(U;Y)$ is given by $E^*=Q \Lambda$, where $\Lambda \in \mathbb{R}^{k}$ is a vector of $k$ independent Gaussian random variables following the distribution $\mcN(\bm{0},G_\Lambda)$ with $G_\Lambda$ being a diagonal matrix with $D_1,\cdots,D_k$ on the diagonal. Moreover, $\Lambda_{i}$ is chosen such that $(\sigma_{i}V_{i}^{T} U- \Lambda_{i})$ is independent of $\Lambda_{i}$, where $V_{i}$ is the $i$-th column of $V$.
 
 Under a perturbation budget $D$ such that $D\geq \sum_{i=1}^{k} \sigma_{i}$, the smallest achievable mutual information  is 0, and an optimal perturbation is taking  
 $E^{*}=-HU.$
\end{thm}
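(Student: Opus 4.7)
The plan is to exploit the SVD $H=QCV^{T}$ to diagonalize the channel, reduce the problem to the classical rate-distortion problem for parallel independent Gaussian sources, and then invoke the reverse water-filling solution.

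First, I would change variables by setting $\tilde{U}=V^{T}U\in\mbR^{k}$. Since $V$ has orthonormal columns and $U\sim\mcN(0,I_m)$, we have $\tilde{U}\sim\mcN(0,I_k)$, and $HU=QC\tilde{U}$. Writing $E=Q\Lambda+E^{\perp}$ with $\Lambda=Q^{T}E\in\mbR^{k}$ and $E^{\perp}=(I-QQ^{T})E$ orthogonal to $\mathrm{range}(Q)$, the output becomes $Y=Q(C\tilde{U}+\Lambda)+E^{\perp}$, which cleanly separates a useful part $Q^{T}Y=C\tilde{U}+\Lambda$ from a signal-free component $(I-QQ^{T})Y=E^{\perp}$. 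Any budget spent on $E^{\perp}$ is wasted, so $E^{\perp}=0$ may be assumed WLOG and the full budget allocated to $\Lambda$ with $\mbE\|\Lambda\|^{2}\leq D$. Moreover, $U$ decomposes orthogonally as $(\tilde{U},V_{\perp}^{T}U)$ with $V_{\perp}^{T}U$ independent of both $\tilde{U}$ and $X=HU$, so any dependence of $\Lambda$ on $V_{\perp}^{T}U$ can only increase $I(U;Y)$; one may therefore restrict to $\Lambda$ depending on $U$ only through $\tilde{U}$, giving $I(U;Y)=I(\tilde{U};C\tilde{U}+\Lambda)$.

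Second, the reduced problem is exactly the rate-distortion problem for $k$ parallel independent Gaussian sources: minimize $I(\tilde{U};C\tilde{U}+\Lambda)$ subject to $\sum_{i=1}^{k}\mbE\Lambda_{i}^{2}\leq D$, where the $\sigma_i\tilde{U}_i$'s are independent $\mcN(0,\sigma_i^{2})$. For the converse I would combine $h(\tilde{U})=\sum_i h(\tilde{U}_i)$ (from independence), subadditivity of differential entropy, and the fact that conditioning reduces entropy to obtain
\[
I(\tilde{U};C\tilde{U}+\Lambda)\;\geq\;\sum_{i=1}^{k}I(\tilde{U}_i;\sigma_i\tilde{U}_i+\Lambda_i)\;\geq\;\sum_{i=1}^{k}\tfrac{1}{2}\log^{+}\!(\sigma_i^{2}/D_i),
\]
where $D_i=\mbE\Lambda_i^{2}$ and the second inequality is the scalar Gaussian rate-distortion bound. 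A Lagrangian/KKT analysis of the resulting convex minimization of $\sum\tfrac{1}{2}\log^{+}(\sigma_i^{2}/D_i)$ over $D_i\in[0,\sigma_i^{2}]$ with $\sum_i D_i\leq D$ then yields the reverse water-filling form $D_i=\min(\sigma_i^{2},\tau)$ with $\tau$ pinned down by $\sum_i D_i=D$, and optimal value $\tfrac{1}{2}\sum_i\log(\sigma_i^{2}/D_i)$.

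Third, for achievability I would take $\Lambda$ to be a zero-mean Gaussian vector with independent components $\Lambda_i\sim\mcN(0,D_i)$, jointly Gaussian with $\tilde{U}$ in such a way that the scalar backward test channel holds (the perturbation independent of the perturbed signal, as in the theorem statement). A direct variance/covariance computation then yields $I(\tilde{U}_i;\sigma_i\tilde{U}_i+\Lambda_i)=\tfrac{1}{2}\log(\sigma_i^{2}/D_i)$ component by component; independence across $i$ lets the joint mutual information sum to match the converse bound, and setting $E^{*}=Q\Lambda$ transports this construction back to the original coordinates. For the high-budget regime, $E^{*}=-HU$ makes $Y=0$ and $I(U;Y)=0$ trivially, which becomes feasible as soon as $D\geq\mbE\|HU\|^{2}=\sum_i\sigma_i^{2}$. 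The main obstacle I expect is carefully justifying the two orthogonality reductions in the first step (eliminating $E^{\perp}$ and the $V_{\perp}^{T}U$-dependence of $\Lambda$) so that the reduced problem is unambiguously a parallel-Gaussian rate-distortion problem; after this reduction, both the water-filling converse and Gaussian achievability are standard.
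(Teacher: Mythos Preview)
Your proposal is correct and follows essentially the same route as the paper: diagonalize via the SVD $H=QCV^{T}$, reduce to the classical parallel-Gaussian rate-distortion problem for $C\tilde{U}$, and invoke reverse water-filling (the paper cites Theorem~10.3.3 of Cover--Thomas directly rather than re-deriving the converse and achievability). Your treatment of the two orthogonality reductions is in fact more careful than the paper's, which simply asserts that minimizing $I(U;HU+E)$ is equivalent to minimizing $I(U'';U''+Q^{T}E)$ and that the power constraint on $E$ passes to $Q^{T}E$ without spelling out why $E^{\perp}$ and $V_{\perp}^{T}U$-dependence can be discarded.
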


\begin{proof}
(of Theorem \ref{Thm:GaussianMultipleCopies}) Without loss of generality, we assume $n\leq m$, and we will show Theorem \ref{Thm:GaussianMultipleCopies} for $U_i\sim\mcN(0,\sigma^2)$ . Then $U':=V^TU$ performs a (projected) rotation of $U$ via matrix $V$, and each element of $U':=V^TU$ follows the same distribution as elements in $U$, i.e., $U'\sim \mcN(0,\sigma^2I)$ where $I$ is an identity matrix of dimension $k \times k$. Moreover,  $U'':=CV^TU$ is a vector following the $k$ dimensional joint Gaussian distribution, i.e., $U''=[\sigma_1 U'_1, \sigma_2 U'_2, \cdots, \sigma_k U'_k]^T\sim \mcN(0,G_{U''})$ where
\begin{align*}
G(U'') = \left[\begin{matrix}
\sigma_1^2\sigma^2 &0 &\cdots & 0\\
0 & \sigma_2^2\sigma^2 &\cdots & 0\\
\vdots &\vdots &\ddots &\vdots\\
0 &0 & \cdots & \sigma_k^2\sigma^2\\
\end{matrix}\right].
\end{align*}

To minimize the mutual information between $U$ and $HU+E$,  we need to minimize the mutual information between $U''$ and $U''+Q^{T} E$. The constraint that $\mbE[\|E\|^{2}]\leq D $
translates to the constraint that $\mbE[Q^{T} E] \leq D$. From Theorem 10.3.3 in \cite{cover_elements_2012}, we know the smallest possible mutual information  $I(U''; U''+Q^{T} E)$ is given by 
\begin{align}
I(U;HU+E) = I(U''; U''+Q^{T} E)= \sum_{i=1}^k \frac{1}{2}\log\frac{\sigma_i^2\sigma^2}{D_i},
\end{align}
where
\begin{align}
D_i =
\begin{cases}
\tau, \tau<\sigma_i^2\sigma^2,\\
\sigma_i^2\sigma^2, \tau\geq \sigma_i^2\sigma^2,
\end{cases}
\end{align}
and $\tau$ is chosen such that $\sum_{i=1}^k D_i=D$. We can first design the optimal adversarial attack $\Lambda\in\mbR^k$ for $U''$, and then perform the same rotation $Q$ (as $U''$ goes through) to get the $E=Q\Lambda$. From \cite{cover_elements_2012}, we know the optimal $\Lambda$ should follow $\mcN(\bm{0},G_\Lambda)$, thus the optimal $E=Q\Lambda$. By taking $\sigma^2=1$, we get Theorem \ref{Thm:GaussianMultipleCopies}.
\end{proof}

Theorem \ref{Thm:GaussianMultipleCopies} gives the minimal mutual information that can be achieved by an adversarial perturbation $E$ under the power constraint, and also characterizes the optimal perturbation an adversary can find. The minimal mutual information measures the amount of damage the adversary can make to the decision of the system, e.g., reducing the confidence of classification task held by deep learning agents or even fool them to make totally wrong classifications. The assignment of distortion budget $D_i$ for adversarial perturbation distribution $E$ has similar flavor as the water-filling interpretation in information theory \cite{cover_elements_2012}. However, we use a linear transformation for source message $U$ in hope that the system decision will be more difficult to be compromised by the adversaries.

\subsubsection{Falling Bar Algorithm}

We now propose a simple algorithm, which we call ``falling bar'' algorithm, for determining $\tau$ and $D_i$ in Algorithm \ref{Alg:JCAlgorithm}.  We set the initial bar $\tau$ to be very big so that all the values $\sigma_i^2$ is below the theshold. If the total quantity below the threshold is too big and larger than the capacity $D$, then we decrease the bar $\tau$ so that less quantity is smaller than the . This process repeats until we find the appropriate bar $\tau$. Assume $\sigma_i$ is the $i$-th largest singular values of $H$, and we define an interval
\begin{align}
	R_i =
	\begin{cases}
	[\sigma_{i+1}^2, +\infty), i=0\\
	[\sigma_{i+1}^2, \sigma_i^2), 1\leq i \leq k-1,\\
	(-\infty,\sigma_k^2), i=k.
	\end{cases}
\end{align}

\begin{algorithm} 
	\caption{Falling bar algorithm for determining $\tau$ and $D_i$.} 
	\label{Alg:JCAlgorithm} 
	\begin{algorithmic}[1] 
		\STATE Input: distribution of $U_i\sim\mcN)(0,1)$, $H\in\mbR^{n\times m}$ and $D\in[0,\|H\|_F^2]$.
		
		\STATE Output: $D_i, i=1,\cdots,k$ and $\tau$.
		
		\STATE Compute singular values of $H$, i.e., $\sigma_i, i=1,\cdots,k$ and construct intervals $R_i, i=0,\cdots,k$.
		
	    \FOR{$i=0,1,\cdots,k$}
	
	        \IF{$i=0$}
	
	            \IF{$\sum_{i=1}^k \sigma_i^2 = D$}
	
	                \STATE Choose $\tau$ to be any value greater than $\sigma_{1}^2$, and $D_i=\sigma_i^2, i=1,\cdots,k$
	
	                \STATE Break
	
	            \ENDIF
	
	        \ELSE
	
	             \STATE Compute $\tau = \frac{D - \sum_{j=i+1}^k \sigma_j^2}{i}$
	
	             \IF{$\tau\in R_i$}
	
	                 \STATE Compute $D_j=
	                 	\begin{cases}
	                 	\tau, j = 1,2,\cdots,i,\\
	                 	\sigma_j^2, j = i+1, i+2, \cdots,k
	                 	\end{cases}$
	
	                 \STATE Break
	
	             \ENDIF
	
	        \ENDIF
	
	    \ENDFOR
	
	    \STATE Return $\tau$ and $D_i, i=1,\cdots,k$.
	\end{algorithmic}
\end{algorithm}

\subsection{Adversarial Attacks of Binary Symmetric Channel}\label{Sec:AdversarialAttackBSC}

We now consider adversarial attacks on information communicated via discrete random variable, which has similar flavor as the binary symmetric channel in communication. The joint distribution $p(U,X)$ of $(U,X)$ is defined over $\{0,1\}\times\{0,1\}$, and we denote the probabilities $\mbP(U=0,X=0), \mbP(U=0,X=1), \mbP(U=1,X=0), \mbP(U=1,X=1)$ by $a,b,c$, and $d$, respectively. The distribution $p(E|U,X)$ of $E$ conditioning on $(U,X)$ is defined over $\{0,1\}$, and we denote by $p_1, p_2, p_3$, and $p_4$ the following probabilities respectively
\begin{align}\label{Defn:EConditionUX}
\begin{cases}
\mbP(Y=1|U=0,X=0) = \mbP(E=1|U=0,X=0), \\
\mbP(Y=0|U=0,X=1) = \mbP(E=1|U=0,X=1), \\
\mbP(Y=1|U=1,X=0) = \mbP(E=1|U=1,X=0),\\
\mbP(Y=0|U=1,X=1) = \mbP(E=1|U=1,X=1),
\end{cases}
\end{align}
where we use the definition $Y=:X+E$, and the addition is modular over 2. The overall scheme is illustrated in Figure \ref{Fig:BinarySymmetricChannel}.

\begin{figure}[!htb]
\centering
\includegraphics{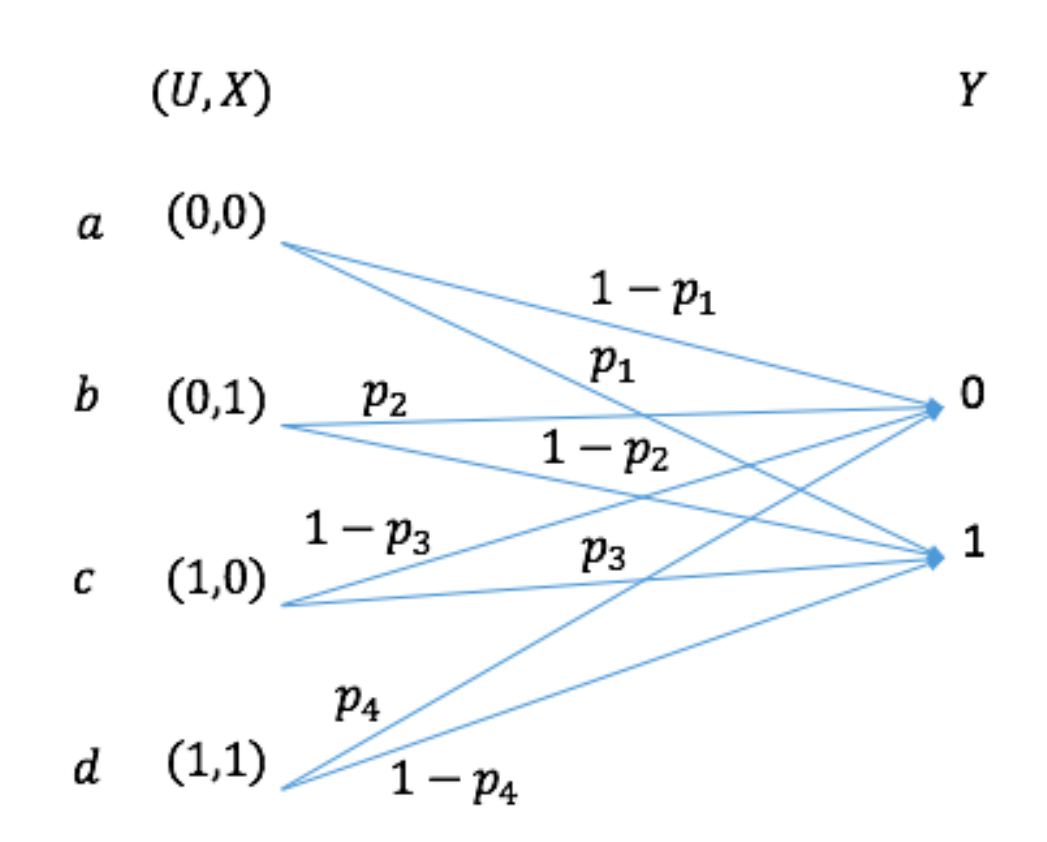}
\caption{Illustration of binary symmetric channel: the source $U$ and the synthesized $X$ are jointly distributed.}\label{Fig:BinarySymmetricChannel}
\end{figure} 

From Figure \ref{Fig:BinarySymmetricChannel}, the joint distribution of $(U,Y)$ or $(U,X+E)$ is
\begin{align}\label{Eq:JointUY}
\begin{cases}
\mbP(U=0,Y=1) = ap_1+b(1-p_2), \\
\mbP(U=0,Y=0) = a(1-p_1) + bp_2, \\
\mbP(U=1,Y=0) = c(1-p_3)+dp_4, \\
\mbP(U=1,Y=1) = cp_3+d(1-p_4),
\end{cases}
\end{align}
the marginal distribution $p(U)$ of $U$ is
\begin{align}\label{Eq:MarginalU}
\mbP(U = 0) = a+b,
\mbP(U = 1) = c+d,
\end{align}
and the marginal distribution $p(Y)$ of $Y$ is
\begin{align}\label{Eq:MarginalY}
\mbP(Y=0) = a(1-p_1) + bp_2 +c(1-p_3) + d p_4,
\mbP(Y=1) = ap_1 + b(1-p_2) + cp_3 + d(1-p_4).
\end{align}
We will denote $\mbP(Y=0)$ by $A$. From (\ref{Eq:JointUY}), (\ref{Eq:MarginalU}) and (\ref{Eq:MarginalY}), we can compute the mutual information as follows
\begin{align}\label{Eq:BSCMutualInfo}
I(U;Y)
& = H(Y) - H(Y|U)\nonumber\\
& = H(Y) - \mbP(U=0) H(Y|U=0) - \mbP(U=1)H(Y|U=1)\nonumber\\
& = \left(a(1-p_1) + bp_2 +c(1-p_3) + d p_4\right) \log\left(\frac{1}{a(1-p_1) + bp_2 +c(1-p_3) + d p_4}\right) \nonumber\\
&\quad + \left(ap_1 + b(1-p_2) + cp_3 + d(1-p_4)\right) \log\left(\frac{1}{ap_1 + b(1-p_2) + cp_3 + d(1-p_4)}\right) \nonumber\\
& \quad - (a+b) \left[\frac{a(1-p_1) + bp_2}{a+b} \log \frac{a+b}{a(1-p_1) + bp_2} + \frac{ap_1+b(1-p_2)}{a+b} \log\frac{a+b}{ap_1+b(1-p_2)}\right] \nonumber\\
& \quad - (c+d) \left[\frac{c(1-p_3) + dp_4}{c+d} \log\frac{c+d}{c(1-p_3) + dp_4}
+ \frac{cp_3+ d(1-p_4)}{c+d} \log\frac{c+d}{cp_3+d(1-p_4)}\right] \nonumber\\
& = I_1(p_1,p_2,p_3,p_4) + I_2(p_1,p_2,p_3,p_4)
-(a+b)I_3(p_1,p_2) - (c+d)I_4(p_3,p_4),
\end{align}
where we use the conditional distribution $p(Y|U)$, i.e.,
\begin{align}\label{Eq:YConditionU}
\begin{cases}
\mbP(Y=0|U=0) = \frac{a(1-p_1) + bp_2}{a+b},\\
\mbP(Y=1|U=0) = \frac{ap_1 + b(1-p_2)}{a+b},\\
\mbP(Y=0|U=1) = \frac{c(1-p_3) + dp_4}{c+d},\\
\mbP(Y=1|U=1) = \frac{cp_3 + d(1-p_4)}{c+d},
\end{cases}
\end{align}
and define
\begin{align*}
\begin{cases}
I_1(p_1,p_2,p_3,p_4) := \left(a(1-p_1) + bp_2 +c(1-p_3) + d p_4\right) \log\left(\frac{1}{a(1-p_1) + bp_2 +c(1-p_3) + d p_4}\right),\\
I_2(p_1,p_2,p_3,p_4) := \left(ap_1 + b(1-p_2) + cp_3 + d(1-p_4)\right) \log\left(\frac{1}{ap_1 + b(1-p_2) + cp_3 + d(1-p_4)}\right),\\
I_3(p_1,p_2) := \left[\frac{a(1-p_1) + bp_2}{a+b} \log \frac{a+b}{a(1-p_1) + bp_2} + \frac{ap_1+b(1-p_2)}{a+b} \log\frac{a+b}{ap_1+b(1-p_2)}\right],\\
I_4(p_3,p_4) := \left[\frac{c(1-p_3) + dp_4}{c+d} \log\frac{c+d}{c(1-p_3) + dp_4}
+ \frac{cp_3+ d(1-p_4)}{c+d} \log\frac{c+d}{cp_3+d(1-p_4)}\right].
\end{cases}
\end{align*}
Thus, by using natural base for the logarithmic function, we have
\begin{align}\label{Eq:DerivativeComponentsP1}
\frac{\partial I_1(p_1,p_3,p_3,p_4)}{\partial p}
= \left[\begin{matrix}
(-a)\log\left(\frac{1}{a(1-p_1)+bp_2+c(1-p_3)+dp_4}\right)  + (-1)(-a)\\
b\log\left(\frac{1}{a(1-p_1)+bp_2+c(1-p_3)+dp_4}\right) + (-1)b\\
(-c)\log\left(\frac{1}{a(1-p_1)+bp_2+c(1-p_3)+dp_4}\right) + (-1)(-c)\\
d\log\left(\frac{1}{a(1-p_1)+bp_2+c(1-p_3)+dp_4}\right) + (-1)d
\end{matrix}\right],
\end{align}
\begin{align}\label{Eq:DerivativeComponentsP2}
\frac{\partial I_2(p_1,p_3,p_3,p_4)}{\partial p}
= \left[\begin{matrix}
a\log\left(\frac{1}{ap_1+b(1-p_2)+cp_3+d(1-p_4)}\right)  + (-1)a\\
(-b)\log\left(\frac{1}{ap_1+b(1-p_2)+cp_3+d(1-p_4)}\right) + (-1)(-b)\\
c\log\left(\frac{1}{ap_1+b(1-p_2)+cp_3+d(1-p_4)}\right) + (-1)c\\
(-d)\log\left(\frac{1}{ap_1+b(1-p_2)+cp_3+d(1-p_4)}\right) + (-1)(-d)
\end{matrix}\right],
\end{align}
\begin{align}\label{Eq:DerivativeComponentsP3}
\frac{\partial I_3(p_1,p_2)}{\partial p}
= \left[\begin{matrix}
\frac{a}{a+b}\log\frac{\mbP(Y=0|U=0)}{\mbP(Y=1|U=0)}\\
\frac{b}{a+b}\log\frac{\mbP(Y=1|U=0)}{\mbP(Y=0|U=0)}\\
0 \\
0
\end{matrix}\right],
\end{align}
and
\begin{align}\label{Eq:DerivativeComponentsP4}
\frac{\partial I_3(p_1,p_2)}{\partial p}
= \left[\begin{matrix}
0 \\ 0 \\
\frac{c}{a+b}\log\frac{\mbP(Y=0|U=1)}{\mbP(Y=1|U=1)}\\
\frac{d}{a+b}\log\frac{\mbP(Y=1|U=1)}{\mbP(Y=0|U=1)}
\end{matrix}\right].
\end{align}

To perform adversarial attack, we propose to minimize $I(U;Y)$ in (\ref{Eq:BSCMutualInfo}) over $p(E|U,X)$ under distortion constraint, i.e.,
\begin{align}\label{Defn:BSCAdversarialAttackExplicit}
\min_{p_1,p_2,p_3,p_4\in[0,1]} I(U;Y), {\rm s.t.\ }ap_1+bp_2+cp_3+dp_4 \leq \epsilon.
\end{align}

From Section \ref{sec:problemformulation}, we know \eqref{Defn:BSCAdversarialAttackExplicit} is convex. In Theorem \ref{Thm:OptimalityConditions}, we give the necessary optimality conditions for the optimal solution to \eqref{Defn:BSCAdversarialAttackExplicit}.

\begin{thm}\label{Thm:OptimalityConditions}
Let $p^*=[p_1^*,p_2^*,p_3^*,p_4^*]^T\in[0,1]^4$ be the optimal solution of problem \eqref{Defn:BSCAdversarialAttackExplicit}, then there must exist a $\lambda^*\geq0$ such that
\begin{align}\label{Eq:PrimalFeas}
ap_1^*+bp_2^*+cp_3^*+dp_4^*\leq \epsilon,
\end{align}
\begin{align}\label{Eq:DualFeas}
\begin{cases}
a\left(\log\frac{\mbP^*(Y=0)\mbP^*(Y=1|U=0)}{\mbP^*(Y=1)\mbP^*(Y=0|U=0)} + \lambda^*\right) = 0,\\
b\left(\log\frac{\mbP^*(Y=1)\mbP^*(Y=0|U=0)}{\mbP^*(Y=0)\mbP^*(Y=1|U=0)} + \lambda^* \right) = 0,\\
c\left(\log\frac{\mbP^*(Y=0)\mbP^*(Y=1|U=1)}{\mbP^*(Y=1)\mbP^*(Y=0|U=1)} + \lambda^*\right) = 0, \\
d\left(\log\frac{\mbP^*(Y=1)\mbP^*(Y=0|U=1)}{\mbP^*(Y=0)\mbP^*(Y=1|U=1)} + \lambda^* \right) = 0,
\end{cases}
\end{align}
and
\begin{align}\label{Eq:ComplementarySlack}
\lambda^*(ap_1^*+bp_2^*+cp_3^*+dp_4^* - \epsilon) = 0,
\end{align}
where the probability $\mbP^*(\cdot)$ is computed using $p^*$. Furthermore, if both $a$ and $b$ are nonzero, or both $c$ and $d$ are zero, then $\lambda^*=0$. Moreover, if none of $a,b,c,d$ is zero, we have
\begin{align}\label{Eq:SpecialEquality}
\frac{\mbP^*(Y=0|U=0)}{\mbP^*(Y=1|U=0)} = \frac{\mbP^*(Y=0|U=1)}{\mbP^*(Y=1|U=1)} = \frac{\mbP^*(Y=0)}{\mbP^*(Y=1)}.
\end{align}
\end{thm}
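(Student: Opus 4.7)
The plan is to apply the KKT conditions directly. By Lemma \ref{Lem:OptimizationConvexity}, the problem (\ref{Defn:BSCAdversarialAttackExplicit}) is convex in $p=(p_1,p_2,p_3,p_4)$, and Slater's condition holds (the point $p=(0,0,0,0)$ is strictly feasible whenever $\epsilon>0$). Hence the KKT conditions are necessary for optimality, and it suffices to (a) write down the stationarity equation for the Lagrangian, (b) identify it with (\ref{Eq:DualFeas})--(\ref{Eq:ComplementarySlack}), and (c) deduce the two additional statements about $\lambda^*$ and the ratio identity (\ref{Eq:SpecialEquality}).

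I would first form the Lagrangian
\[
L(p,\lambda) \;=\; I(U;Y) \;+\; \lambda\bigl(ap_1+bp_2+cp_3+dp_4-\epsilon\bigr), \qquad \lambda\geq 0,
\]
and compute $\nabla_p I(U;Y)$ using the four decomposed gradients (\ref{Eq:DerivativeComponentsP1})--(\ref{Eq:DerivativeComponentsP4}). Summing the contributions from $I_1$, $I_2$, $-(a+b)I_3$ and $-(c+d)I_4$, the $\pm a$ additive constants coming from $I_1$ and $I_2$ cancel, and substituting the marginal (\ref{Eq:MarginalY}) and conditional (\ref{Eq:YConditionU}) probabilities, the $p_1$-component collapses to
\[
\frac{\partial I(U;Y)}{\partial p_1} \;=\; a\,\log\frac{\mbP(Y=0)\,\mbP(Y=1\mid U=0)}{\mbP(Y=1)\,\mbP(Y=0\mid U=0)}.
\]
Adding $\lambda a$ and setting to zero gives the first line of (\ref{Eq:DualFeas}); the remaining three lines follow by identical simplifications for $p_2,p_3,p_4$. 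Primal feasibility (\ref{Eq:PrimalFeas}) and complementary slackness (\ref{Eq:ComplementarySlack}) are restatements of the standard KKT conditions attached to the inequality constraint.

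For the vanishing of $\lambda^*$, when $a>0$ and $b>0$ the first two lines of (\ref{Eq:DualFeas}) take the form $\log\alpha+\lambda^*=0$ and $\log(1/\alpha)+\lambda^*=0$, where $\alpha$ is the argument of the logarithm in the first line; adding the two equalities yields $2\lambda^*=0$. The companion case (with $c,d$ both nonzero, which I suspect is the intended reading of the statement) follows by the same argument applied to the last two lines; the degenerate case in which $c=d=0$ makes $U$ almost surely constant, so $I(U;Y)\equiv 0$, the distortion constraint is slack, and $\lambda^*=0$ by complementary slackness. If none of $a,b,c,d$ is zero then $\lambda^*=0$, and every logarithm in (\ref{Eq:DualFeas}) must vanish; rewriting the four arising equalities as ratios of $\mbP(Y=0\mid U=i)/\mbP(Y=1\mid U=i)$ against $\mbP(Y=0)/\mbP(Y=1)$ gives exactly (\ref{Eq:SpecialEquality}).

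The main obstacle is the bookkeeping in the gradient computation: tracking the four additive pieces of $I(U;Y)$, cancelling the constant $\pm a$ (respectively $\pm b,\pm c,\pm d$) terms from $I_1$ and $I_2$ against each other, and recognising the surviving expression as a single compact log-ratio of marginal and conditional probabilities. A minor subtlety is that the stated condition $a(\log(\cdot)+\lambda^*)=0$ is the KKT stationarity without boundary multipliers for the constraints $p_i\in[0,1]$; matching the theorem's form requires either assuming $p^*$ lies in the interior of $[0,1]^4$ or absorbing any active box-multipliers into the interpretation of the identity, which I would flag as an implicit hypothesis and otherwise leave to the reader.
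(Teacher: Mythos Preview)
Your proposal is correct and follows essentially the same approach as the paper: form the Lagrangian, compute the four partial derivatives by combining (\ref{Eq:DerivativeComponentsP1})--(\ref{Eq:DerivativeComponentsP4}), invoke the KKT conditions to obtain (\ref{Eq:PrimalFeas})--(\ref{Eq:ComplementarySlack}), and then add the first two (respectively last two) stationarity equations to force $\lambda^*=0$ and deduce (\ref{Eq:SpecialEquality}). Your treatment is in fact slightly more careful than the paper's, which does not mention Slater's condition, does not address the box constraints $p_i\in[0,1]$, and indeed appears to contain the typo you suspected (``both $c$ and $d$ are zero'' should read ``nonzero'').
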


\begin{proof}
(of Theorem \ref{Thm:OptimalityConditions}) We can get the Lagrangian of \eqref{Defn:BSCAdversarialAttackExplicit} as
\begin{align}\label{Eq:Lagrangian}
\mcL(p,\lambda) = I(U;Y) + \lambda(ap_1+bp_2+cp_3+dp_4 - \epsilon),
\end{align}
and from \eqref{Eq:DerivativeComponentsP1}, \eqref{Eq:DerivativeComponentsP2}, \eqref{Eq:DerivativeComponentsP3}, and \eqref{Eq:DerivativeComponentsP4}, we have the partial derivatives as
\begin{align}
\begin{cases}
\frac{\partial \mcL(p,\lambda)}{\partial p_1}
= a\left(\log\frac{\mbP(Y=0)\mbP(Y=1|U=0)}{\mbP(Y=1)\mbP(Y=0|U=0)} + \lambda\right),\\
\frac{\partial \mcL(p,\lambda)}{\partial p_2}
= b\left(\log\frac{\mbP(Y=1)\mbP(Y=0|U=0)}{\mbP(Y=0)\mbP(Y=1|U=0)} + \lambda \right),\\
\frac{\partial \mcL(p,\lambda)}{\partial p_3}
= c\left(\log\frac{\mbP(Y=0)\mbP(Y=1|U=1)}{\mbP(Y=1)\mbP(Y=0|U=1)} + \lambda\right), \\
\frac{\partial \mcL(p,\lambda)}{\partial p_4}
= d\left(\log\frac{\mbP(Y=1)\mbP(Y=0|U=1)}{\mbP(Y=0)\mbP(Y=1|U=1)} + \lambda \right).
\end{cases}
\end{align}
From the KKT conditions \cite{boyd_convex_2004}, we have for an optimal solution $p^*\in[0,1]^4$, there must exist a $\lambda^*\geq 0$ such that
\begin{align}
ap_1^*+bp_2^*+cp_3^*+dp_4^*\leq \epsilon,
\lambda^*(ap_1^*+bp_2^*+cp_3^*+dp_4^* - \epsilon) = 0
\end{align}
and
\begin{align}
\frac{\partial \mcL(p^*,\lambda^*)}{\partial p_1}
= 0,
\frac{\partial \mcL(p^*,\lambda^*)}{\partial p_2}
= 0,
\frac{\partial \mcL(p^*,\lambda^*)}{\partial p_3}
= 0,
\frac{\partial \mcL(p^*,\lambda^*)}{\partial p_4}
= 0.
\end{align}
Thus, we proved the first part of Theorem \ref{Thm:OptimalityConditions}.

When both $a$ and $b$ are nonzero, we have from \eqref{Eq:DualFeas}
\begin{align}
\log\frac{\mbP^*(Y=1|U=0)}{\mbP^*(Y=0|U=0)} = -\lambda^* - \log\frac{\mbP^*(Y=0)}{\mbP^*(Y=1)},
\log\frac{\mbP^*(Y=0|U=0)}{\mbP^*(Y=1|U=0)} = -\lambda^* + \log\frac{\mbP^*(Y=0)}{\mbP^*(Y=1)},
\end{align}
or
\begin{align}
\log\frac{\mbP^*(Y=1|U=0)}{\mbP^*(Y=0|U=0)} = -\lambda^* - \log\frac{\mbP^*(Y=0)}{\mbP^*(Y=1)},
\log\frac{\mbP^*(Y=1|U=0)}{\mbP^*(Y=0|U=0)} = \lambda^* - \log\frac{\mbP^*(Y=0)}{\mbP^*(Y=1)},
\end{align}
thus $\lambda^*=0$, and
\begin{align}
\frac{\mbP^*(Y=1|U=0)}{\mbP^*(Y=0|U=0)} = \frac{\mbP^*(Y=1)}{\mbP^*(Y=0)}.
\end{align}
Similarly for the case where both $c$ and $d$ are nonzero, we can get
\begin{align}
\frac{\mbP^*(Y=1|U=1)}{\mbP^*(Y=0|U=1)} = \frac{\mbP^*(Y=1)}{\mbP^*(Y=0)}.
\end{align}
When none of $a,b,c,d$ is zero, we get the above two equations, and thus \eqref{Eq:SpecialEquality}.
\end{proof}

Theorem \ref{Thm:OptimalityConditions} characterizes the distribution of $Y$ that is achieved by the optimal adversarial perturbation $E$. Theorem \ref{Thm:OptimalityConditions} implies that when $D$ is large enough, the optimal $E$ should achieve a distribution for $Y$ which is independent of the source $U$, e.g., $\mbP^*(Y=0)=\mbP^*(Y=0|U=0)=\mbP^*(Y=1|U=0)$. This means a zero mutual information, which coincides with the intuition in rate distortion theory, i.e., a big enough distortion $D$ can lead to zero rate.

\section{Adversarial Attacks Over Subset of Random Variables}\label{Sec:L1MetricAttack}

 In this section, we consider the problem of information-theoretically optimal adversarial attacks over subset of random variables where the attacker can attack only a given small number of outputs. For this problem, we will derive the optimal attacking strategies to minimize the mutual information between $U\in\mbR^m$ and $Y=X+E=U+E\in\mbR^m$ under sparse attacks over a subset of random variables indexed by $T\subset\{1,2,\cdots,m\}$ and $|T|=k<m$.

 We assume $(U,X)\sim p(U,X)$, and the attacker picks a fixed subset of random variables $X_T\in\mbR^{|T|}$ which consists of elements from $X$ as indexed by $T$, and the attacker can arbitrarily change the output of them by designing $E_T\in\mbR^{|T|}$ or $p(E_T|U,X)$. This can be formulated as follows
\begin{align}\label{Defn:AttackingModelOverSubsetRandomVariables}
&\min_{T\subset[m],p(E_T|U,X)} ~~~~~~I(U;U+E)\nonumber\\
&\text{s.t.}~~~~~~  E_{\overline{T}}=0,
\end{align}
where $[m]$ is defined as $\{1,2,\cdots,m\}$, $\overline{T}$ is the complement of the set of attacked sensors, and $p(E_T|U,X)$ is an arbitrary valid distribution. Under this formulation, we have characterizations of the optimal attack as stated in Theorem \ref{Thm:OptimalAttackOverSubsetofRandomVariables}.

\begin{thm}\label{Thm:OptimalAttackOverSubsetofRandomVariables}
Let $U\in\mbR^m$ follow Gaussian distribution $\mcN(0,\Sigma)$ where
\begin{align*}
\Sigma = \left[\begin{matrix}
\sigma_1^2 &0 &\cdots &0\\
0 &\sigma_2^2 &\cdots &0\\
\vdots & \vdots &\ddots &\vdots\\
0 &0 &\cdots &\sigma_m^2.
\end{matrix}\right]
\end{align*}
Under the attack formulation \eqref{Defn:AttackingModelOverSubsetRandomVariables}, the minimum mutual information will be
\begin{align}
I(U;U+E^*) = \frac{1}{2}\log\left((2\pi e)^{|\overline{T^*}|}|\Sigma_{\overline{T^*}}|\right),
\end{align}
where $T^*$ is the index set such that the submatrix of $\Sigma$ with rows specified by $\overline{T^*}$ and columns specified by $\overline{T^*}$ has the smallest determinant.
\end{thm}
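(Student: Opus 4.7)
The plan is to decompose the joint minimization in \eqref{Defn:AttackingModelOverSubsetRandomVariables} into an inner optimization over the attacker's conditional distribution $p(E_T|U,X)$ for a fixed attacked index set $T$, followed by an outer combinatorial optimization over $T$ itself. Since $X=U$ and $E_{\overline{T}}=0$ are imposed by the setup, we always have $Y_{\overline{T}} = U_{\overline{T}}$, while the attacker retains free choice of $p(E_T \mid U)$ on the attacked coordinates.

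For the inner step with $T$ fixed, I expect the minimum of $I(U;U+E)$ to equal $h(U_{\overline{T}}) = \frac{1}{2}\log\left((2\pi e)^{|\overline{T}|}|\Sigma_{\overline{T}}|\right)$. The lower bound would follow from the chain rule and non-negativity of conditional mutual information: $I(U;Y) = I(U;Y_{\overline{T}}) + I(U;Y_T \mid Y_{\overline{T}}) \geq I(U;Y_{\overline{T}}) = I(U;U_{\overline{T}})$. Using the independence of the Gaussian coordinates implied by diagonal $\Sigma$, this last quantity equals $h(U) - h(U_T \mid U_{\overline{T}}) = h(U) - h(U_T) = h(U_{\overline{T}})$. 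For achievability, I would construct an explicit attack $E_T = -U_T + N_T$ with $N_T$ an independent Gaussian vector; then $Y_T = N_T$ carries no information about $U$, and the decision system effectively learns only $Y_{\overline{T}} = U_{\overline{T}}$, giving $I(U;Y) = h(U) - h(U \mid Y) = h(U) - h(U_T) = h(U_{\overline{T}})$, matching the lower bound.

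For the outer step, the problem reduces to minimizing $h(U_{\overline{T}})$ over admissible index sets $T$ of prescribed size $k$. Since $\Sigma$ is diagonal, $|\Sigma_{\overline{T}}| = \prod_{i \in \overline{T}} \sigma_i^2$, so the optimum $T^*$ places the $k$ largest variances into the attacked set, leaving $\overline{T^*}$ to contain the $m-k$ coordinates of smallest variance; this is exactly the characterization the theorem provides in terms of the smallest-determinant principal submatrix of $\Sigma$.

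The main obstacle I foresee is handling the continuous mutual information rigorously. The identity $I(U;Y) = h(Y) - h(Y \mid U)$ is problematic here because $Y_{\overline{T}}$ is a deterministic function of $U$, making $h(Y \mid U)$ formally $-\infty$; the same degeneracy afflicts a naive computation of $I(U;U_{\overline{T}})$ via $h(U_{\overline{T}}) - h(U_{\overline{T}} \mid U)$. I would sidestep this by consistently invoking $I(U;Y) = h(U) - h(U \mid Y)$, which remains finite: conditioning on $Y = (N_T, U_{\overline{T}})$ only removes the $U_{\overline{T}}$ component of $U$ and leaves $h(U \mid Y) = h(U_T)$. This careful choice of representation, combined with the independence of the coordinates of $U$, is what makes the lower and upper bounds meet at the claimed value.
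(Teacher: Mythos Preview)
Your proposal is correct and reaches the same value as the paper, but through a different and somewhat cleaner decomposition. The paper expands $h(U)-h(U\mid U+E)$ coordinate by coordinate via the chain rule and then regroups the summands into two pieces $I_1$ (indices in $T$) and $I_2$ (indices in $\overline{T}$), bounding each separately; this requires tracking which conditioning variables can be dropped at each step and then invoking the earlier Gaussian theorem to argue that $I_1$ can be driven to zero. You instead apply the chain rule for mutual information once at the block level, $I(U;Y)=I(U;Y_{\overline{T}})+I(U;Y_T\mid Y_{\overline{T}})\ge I(U;U_{\overline{T}})=h(U_{\overline{T}})$, which delivers the lower bound in a single line and makes the role of coordinate independence transparent. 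Your explicit achievability construction $E_T=-U_T+N_T$ is also more concrete than the paper's appeal to Theorem~\ref{Thm:SingleGaussianVar}. One remark on the degeneracy you flag: your proposed workaround $I(U;Y)=h(U)-h(U\mid Y)$ does not fully escape the issue, since $h(U\mid Y)$ still hides the term $h(U_{\overline{T}}\mid U_{\overline{T}})=-\infty$; however, the paper's own proof makes exactly the same informal identification when it sets $h(U_{\overline{T}}\mid U_{\overline{T}}+E_{\overline{T}})=0$ in its $I_2$ bound, so your treatment is at least as rigorous as the original, and you deserve credit for having noticed the subtlety at all.
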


\begin{proof}
(of Theorem \ref{Thm:OptimalAttackOverSubsetofRandomVariables}) From the properties of mutual information and different entropy, we have
\begin{align}\label{Eq:MIWithoutSplitting}
I(U;U+E)
& = h(U) - h(U;U+E)\\
& = \sum_{i=1}^m h(U_i|U_1,U_2,\cdots,U_{i-1}) - \sum_{i=1}^m h(U_i|U+E,U_1,U_2,\cdots,U_{i-1}) \\
& = I_1 + I_2,
\end{align}
where we define $I_1$ and $I_2$ as
\begin{align*}
I_1 = \sum_{i=T} (h(U_i|U_1,U_2,\cdots,U_{i-1})-h(U_i|U+E,U_1,U_2,\cdots,U_{i-1}))
\end{align*}
and
\begin{align*}
I_2 = \sum_{i=\overline{T}} (h(U_i|U_1,U_2,\cdots,U_{i-1})-h(U_i|U+E,U_1,U_2,\cdots,U_{i-1})).
\end{align*}

Without loss of generality, we assume the index set $T$ to be $T:=\{i_1,i_2,\cdots,i_k\}\subset[m]$ with $i_1<i_2<\cdots<i_k$. Thus, $\overline{T}=[m]\setminus T:=\{s_1,s_2,\cdots,s_{m-k}\}$ with $s_1<s_2<\cdots<s_{m-k}$. Since
\begin{align}
I_1
& \geq \sum_{j\in[k]} (h(U_{i_j}|U_{i_1},U_{i_2},\cdots,U_{i_{j-1}}) - h(U_{i_j}|U_T+E_T,U_{i_1},U_{i_2},\cdots,U_{i_{j-1}})) \nonumber\\
& = \sum_{j\in[k]} h(U_{i_j}|U_{i_1},U_{i_2},\cdots,U_{i_{j-1}}) - \sum_{j\in[k]} h(U_{i_j}|U_T+E_T,U_{i_1},U_{i_2},\cdots,U_{i_{j-1}})) \nonumber\\
& = h(U_T) - h(U_T|U_T+E_T)\nonumber\\
& = I(U_T;U_T+E_T)
\end{align}
where the first inequality holds due to the independence of elements of $U$ and the fact that conditioning reduces the differential entropy. From Theorem \ref{Thm:SingleGaussianVar}, $I_1$ can achieve 0 by appropriately choosing $p(E_T|U_T)$. Note that $I(U_T;U_T+E_T)$ actually corresponds to the vector form of that in Theorem \ref{Thm:SingleGaussianVar} without considering noise $W$. Similarly, we have from \cite{cover_elements_2012}
\begin{align}
I_2
& \geq \sum_{j\in[m-k]} (h(U_{s_j}|U_{s_{1}},U_{s_{2}},\cdots,U_{s_{j-1}}) - h(U_{s_{j}}|U_{\overline{T}}+E_{\overline{T}},U_{s_1},U_{s_2},\cdots,U_{s_{j-1}})) \\
& = h(U_{\overline{T}}) - h(U_{\overline{T}}|U_{\overline{T}}+E_{\overline{T}}) \\
& = h(U_{\overline{T}}) \\
& = \frac{1}{2}\log\left((2\pi e)^{|\overline{T}|} |\Sigma_{\overline{T}}\right).
\end{align}
Thus
\begin{align*}
I(U;U+E)
= I_1+I_2
\geq \frac{1}{2}\log\left((2\pi e)^{|\overline{T}|} |\Sigma_{\overline{T}}\right),
\end{align*}
and the minimum mutual information is achieved when $\overline{T^*}$ is the index set which contains the smallest $m-k$ diagonal elements of $\Sigma$.
\end{proof}

Theorem \ref{Thm:OptimalAttackOverSubsetofRandomVariables} actually implies that when the attacker can only attack over a subset of the random variables with size $k$ arbitrarily, he/she should attack the $k$ random variables with the largest variances. This also coincides with the intuition in rate distortion theory.

\section{Experimental Results}\label{Sec:ExperimentalResults}

In this section, we provide empirical results to validate our theoretical claims. In the first set of experiments, we consider the adversarial attacks via mutual information minimization \eqref{Defn:OptimizationforGaussian} in the general Gaussian case where the source distribution is Gaussian with zero mean. We take the variance $a^2$ of $U\sim\mcN(0,a^2)$ to be different values, i.e., $a^2\in\{0.1,0.3,0.7,0.9\}$, and similarly for $\sigma^2$ of $W\sim\mcN(0,\sigma^2)$, i.e., $\sigma^2\in\{0.1,0.2,0.3,0.4,0.5,0.6,0.7,0.8,0.9,1\}$. For each pair of $a^2, \sigma^2$, we empirically show how the mutual information changes with the distortion $D$ when $D$ takes value from $\{0.1,0.2,0.3,0.4,0.5,0.6,0.7,0.8,0.9,1\}$. For each group of $(a^2,\sigma^2,D)$, we generate meshgrid for $x\in[-\sqrt{a^2D},\sqrt{a^2D}]$ and $y\in[-\sqrt{\sigma^2D},\sqrt{\sigma^2D}]$, and the get all the grid points satisfying the constraint. We then find from these grid points the poin which achieves the minimal mutual information. The results are presented in Figure \ref{Fig:GeneralGaussian}.

\begin{figure}[!htb]
	\centering
	\begin{subfigure}[b]{0.5\textwidth}
		\centering
		\includegraphics[width=\linewidth]{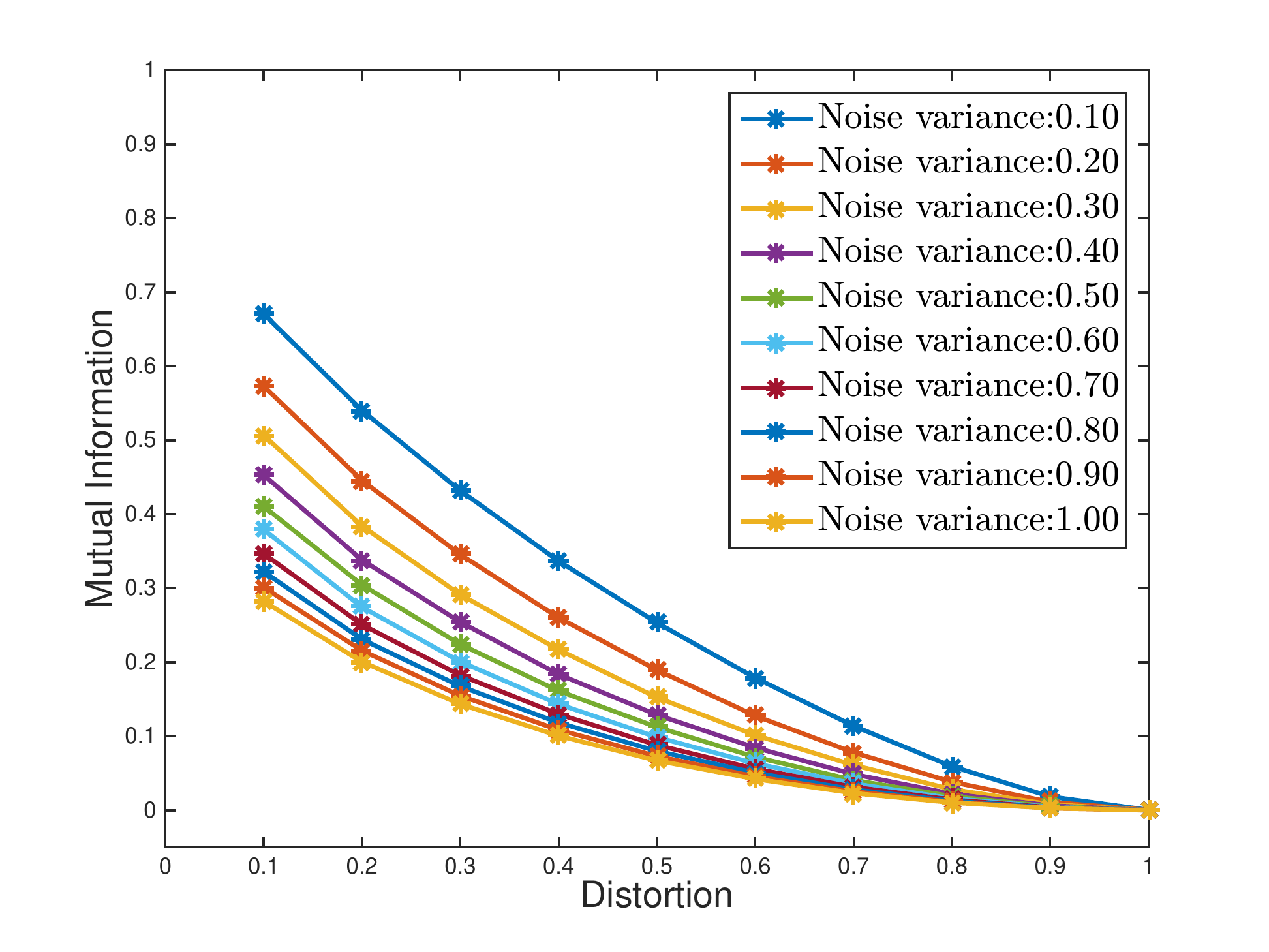}
		\caption{Variance $a^2$ of $U$: 1}
	\end{subfigure}%
	~
	\begin{subfigure}[b]{0.5\textwidth}
		\centering
		\includegraphics[width=\linewidth]{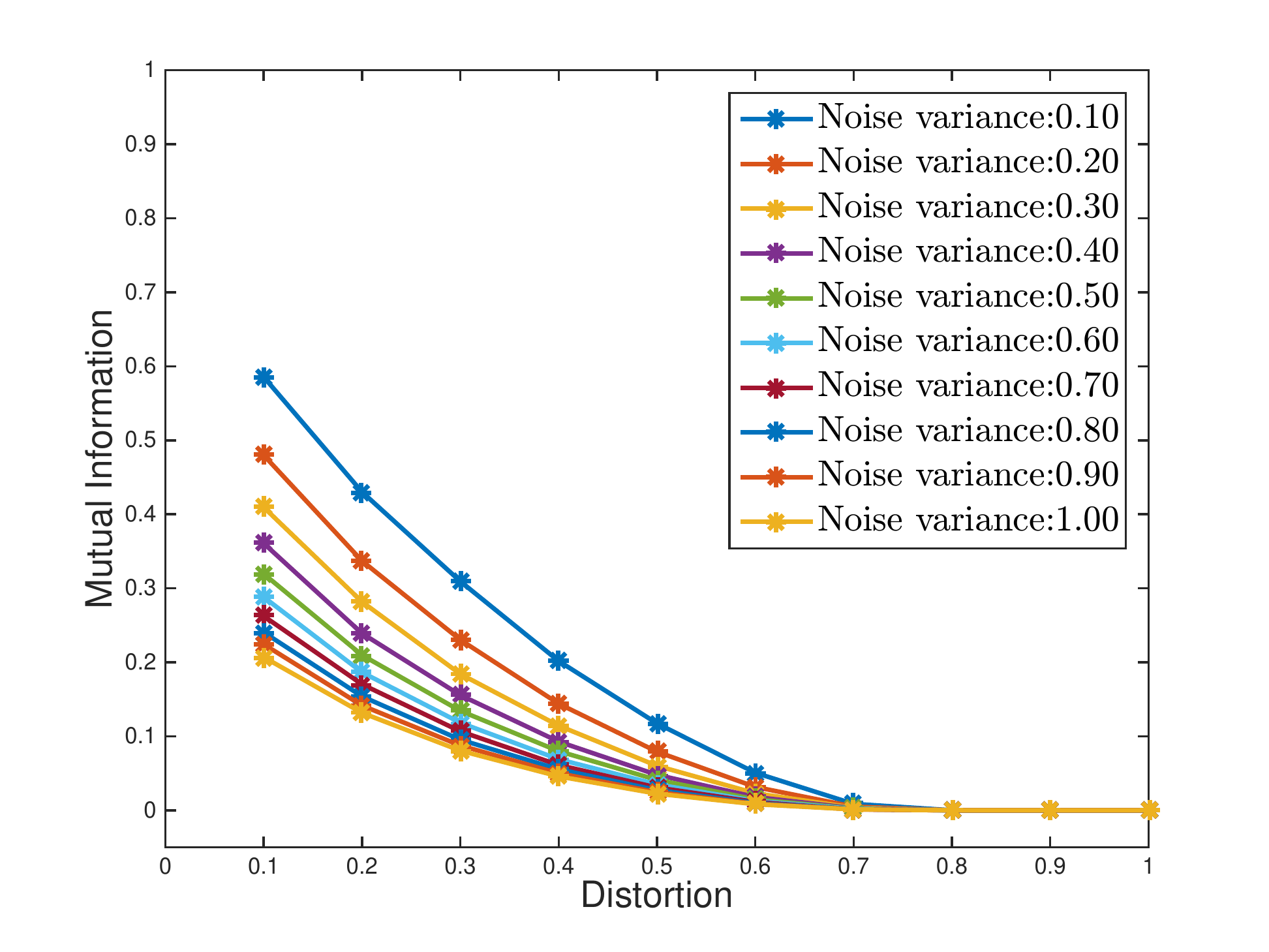}
		\caption{Variance $a^2$ of $U$: 0.75}
	\end{subfigure}
	\begin{subfigure}[b]{0.5\textwidth}
	\centering
	\includegraphics[width=\linewidth]{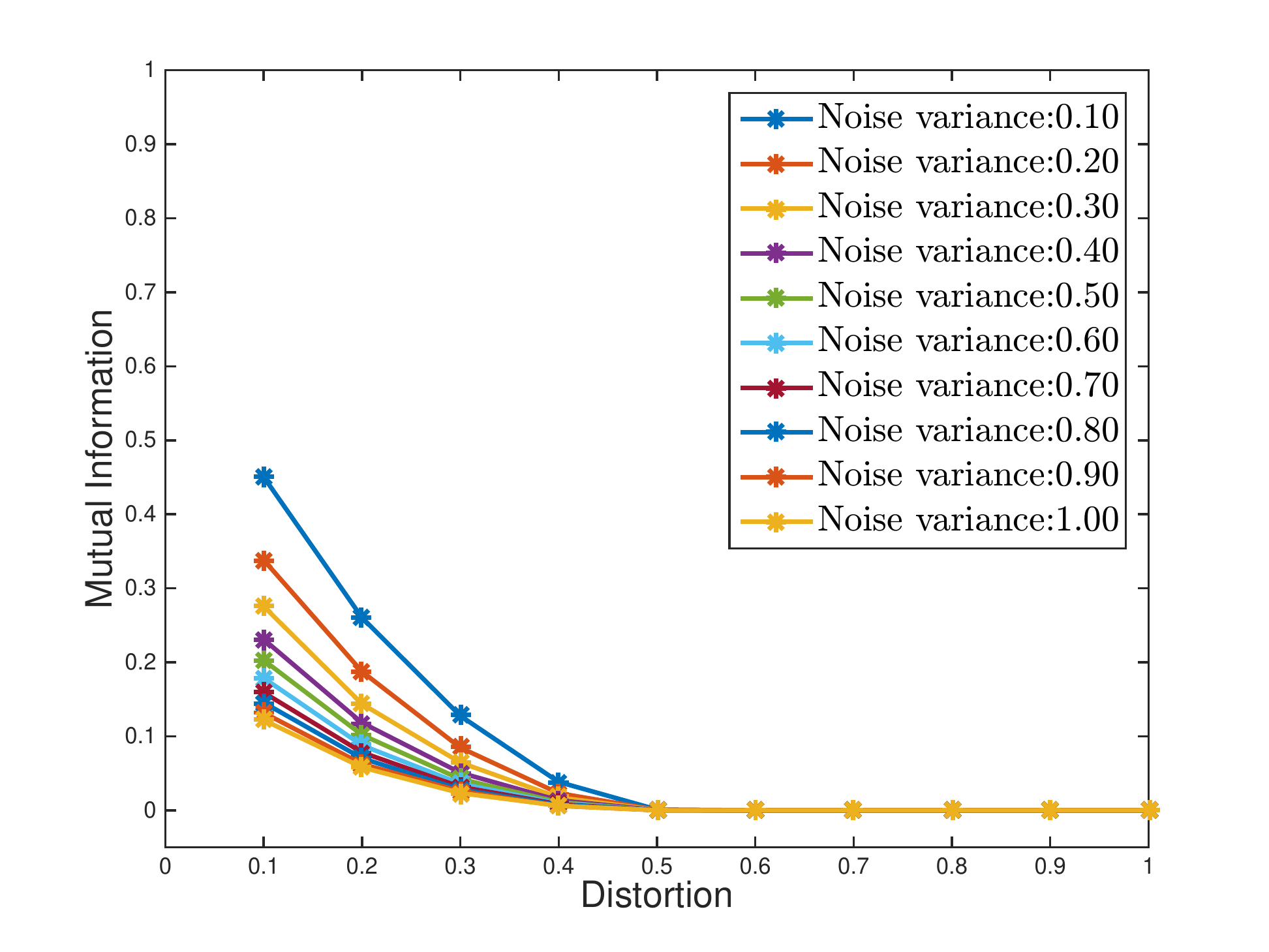}
	\caption{Variance of $a^2$ of $U$: 0.5}
\end{subfigure}%
~
\begin{subfigure}[b]{0.5\textwidth}
	\centering
	\includegraphics[width=\linewidth]{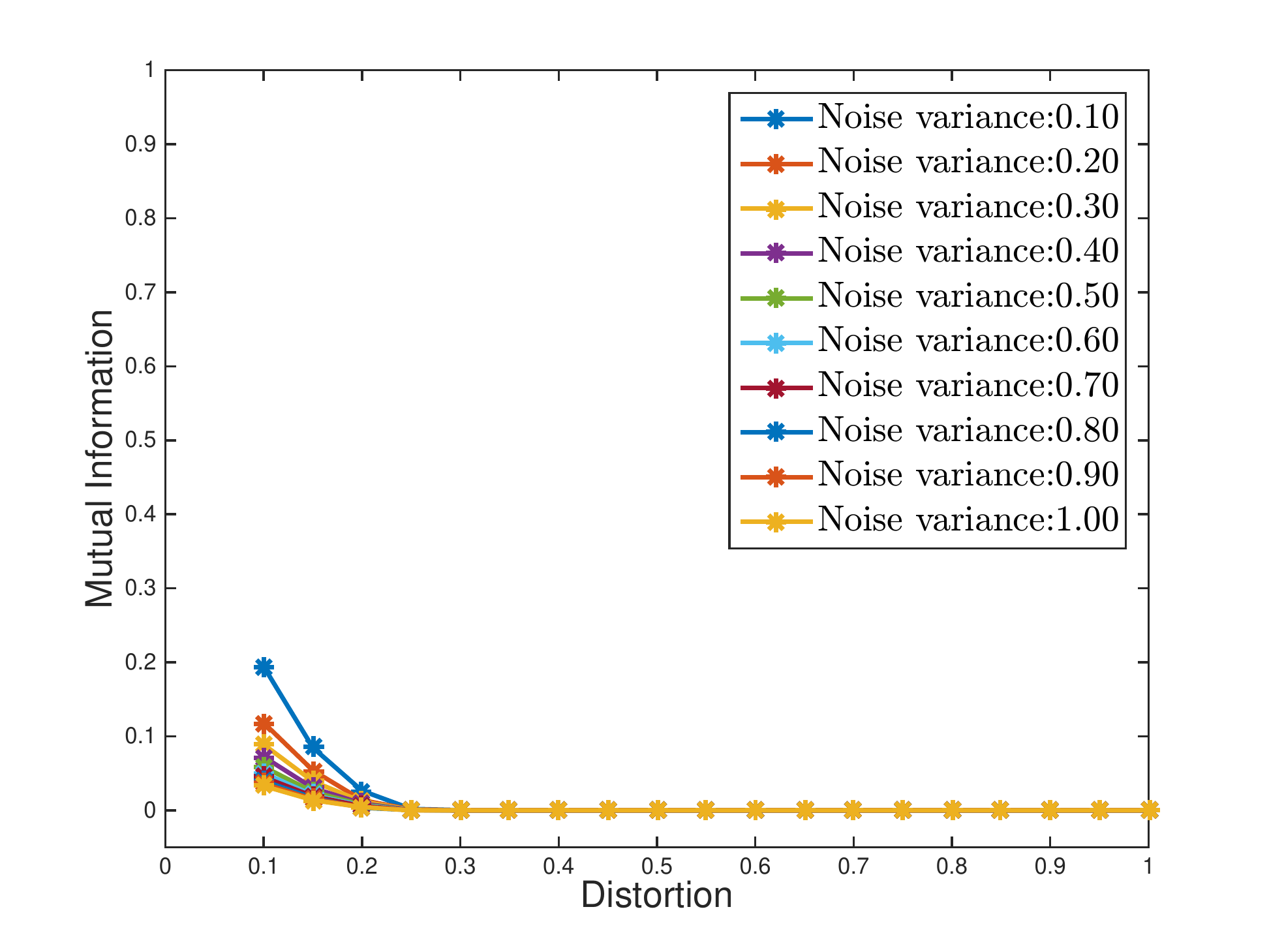}
	\caption{Variance $a^2$ of $U$: 0.25}
\end{subfigure}%
	\caption{Minimum mutual information under different distortion budgets.}\label{Fig:GeneralGaussian}
\end{figure}

From the results, we can see that for fixed source variance $a^2$ and random noise variance $\sigma^2$, the mutual information decreases monotonically with respect to distortion $D$. Besides, to achieve a zero mutual information, i.e., no information can be conveyed about label $U$, the magnitude of the distortion should be comparable to the source variance. For example, when $a^2=1$, the distortion needs to be larger than 0.9 so that an almost zero mutual information can be achieved regardless of the noise variance. Moreover, the adversarial perturbation $E$ can cause more damage to reduce the mutual information than the random noise $W$ does. For example, in the case where $a^2=0.75$ and $D=0.1$, the mutual information decreases from 0.6 to 0.2 as the noise variance increases from 0.1 to 1. However, in the case where $a^2=0.75$ and $\sigma^2=0.1$, the mutual information can decreases from 0.6 to 0 as distortion increases from 0.1 to 1.

In the second set of experiments, we demonstrate the performance of attacking the multiple Gaussian copies with linear projection based on Theorem \ref{Thm:GaussianMultipleCopies}. We take $m$ to be different values, i.e., $m\in\{10,50,250,1250\}$, and the $n$ will take $\alpha m$ with $\alpha$ taking different values, i.e., $\alpha\in\{0.1,0.2,\cdots,0.9\}$ for fat projection matrix, and $\alpha\in\{1.1,1.2,\cdots,1.9\}$ for tall projection matrix. For each pair of $(m,\alpha)$, the projection matrix $H\in\mbR^{n\times m}$ is generated randomly with elements i.i.d. according to $\mcN(0,\frac{1}{m})$. Then, we compute the minimal mutual information for a given $D$ which can take values in $\{0.1,0.2,0.3,0.4,0.5,0.6,0.7,0.8,0.9,1\}\times\sum_{i} \sigma_i^2$. The results are presented in Figure \ref{Fig:MultipleCopiesGaussian} and \ref{Fig:MultipleCopiesGaussianTall}.

\begin{figure}[!htb]
	\centering
	\begin{subfigure}[b]{0.5\textwidth}
		\centering
		\includegraphics[width=\linewidth]{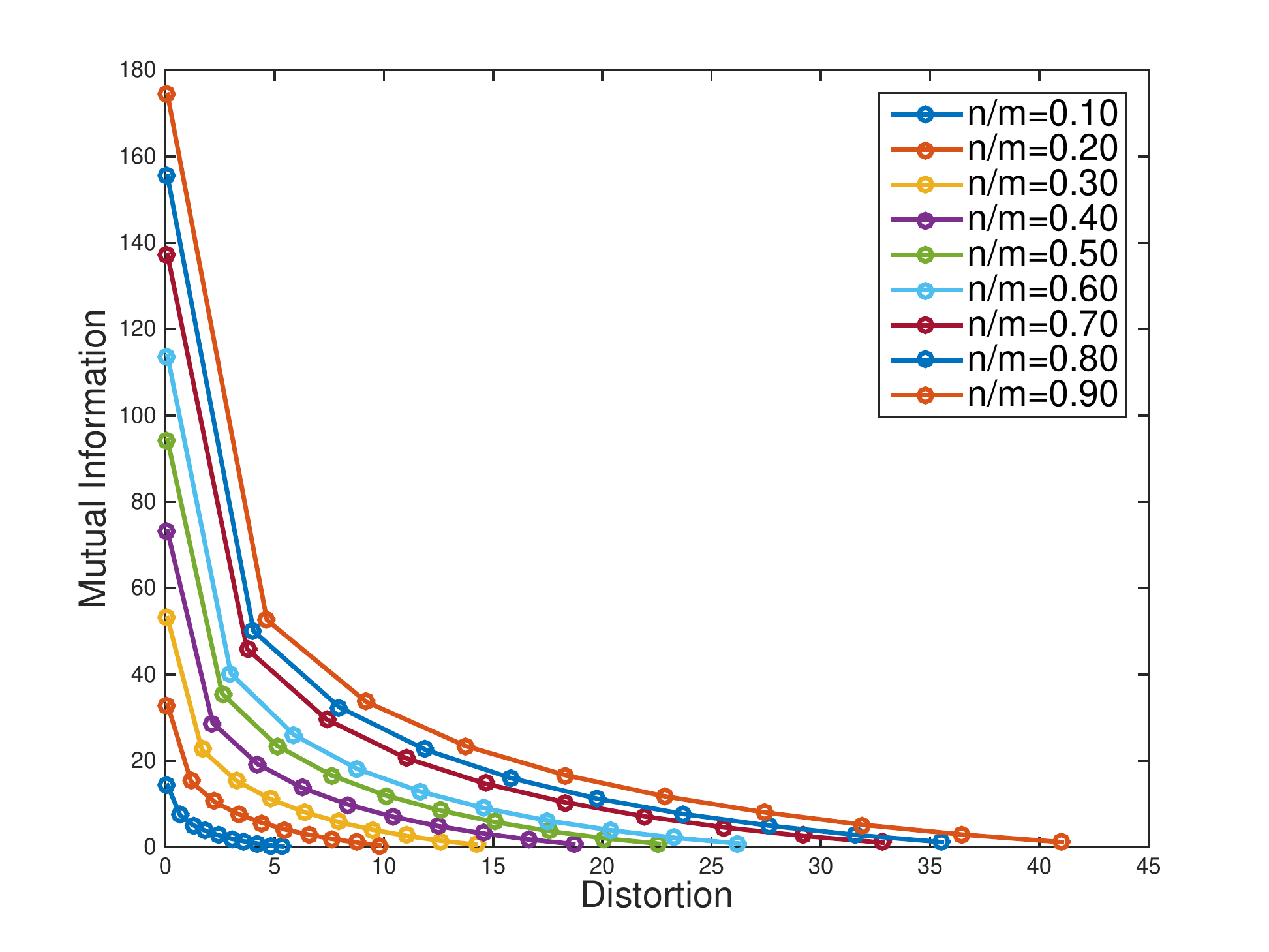}
		\caption{Dimension $m$ of $U$: 50}
	\end{subfigure}%
	~
	\begin{subfigure}[b]{0.5\textwidth}
		\centering
		\includegraphics[width=\linewidth]{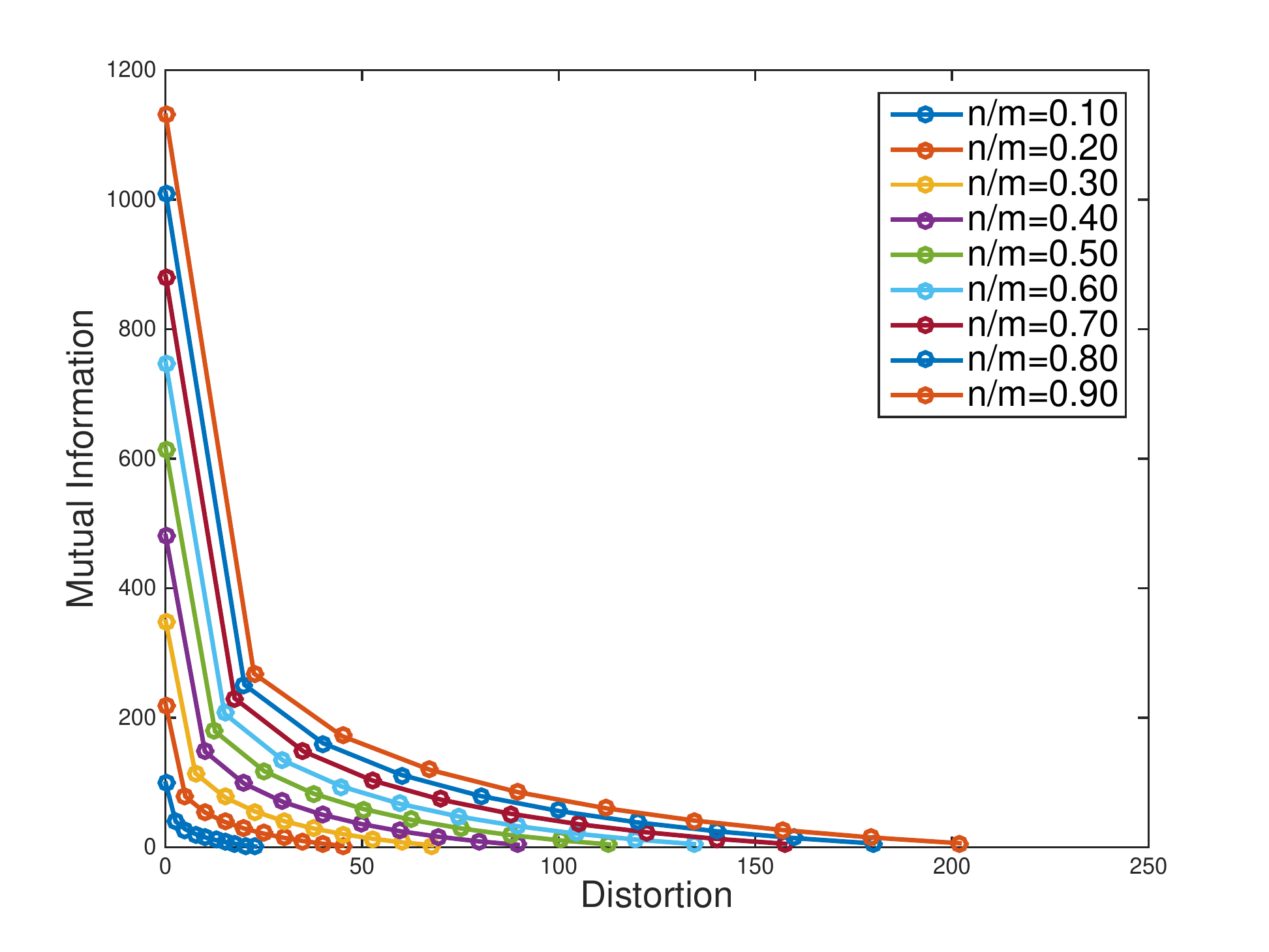}
		\caption{Dimension $m$ of $U$: 250}
	\end{subfigure}
	\begin{subfigure}[b]{0.5\textwidth}
	\centering
	\includegraphics[width=\linewidth]{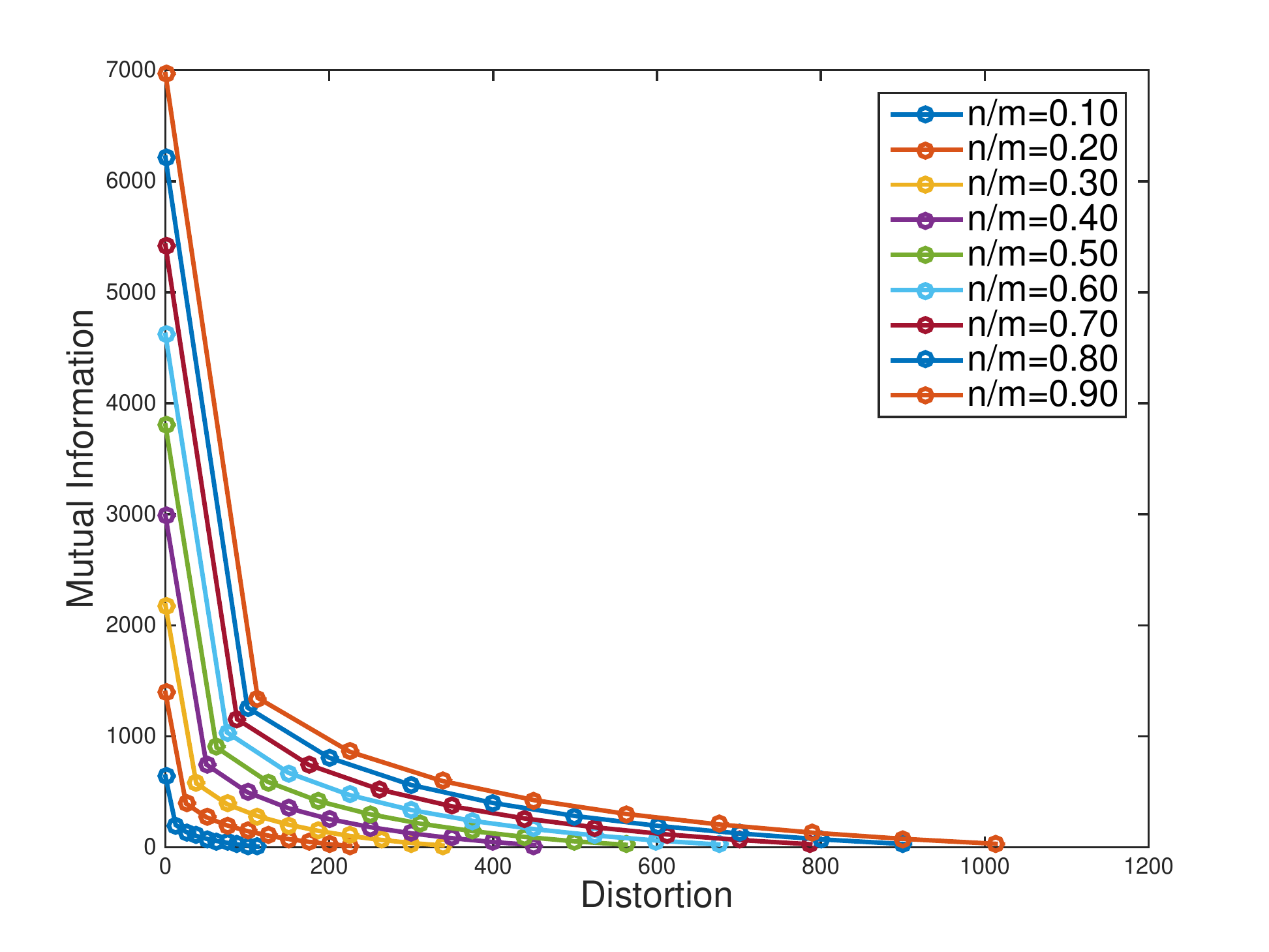}
	\caption{Dimension $m$ of $U$: 1250}
\end{subfigure}%
~
\begin{subfigure}[b]{0.5\textwidth}
	\centering
	\includegraphics[width=\linewidth]{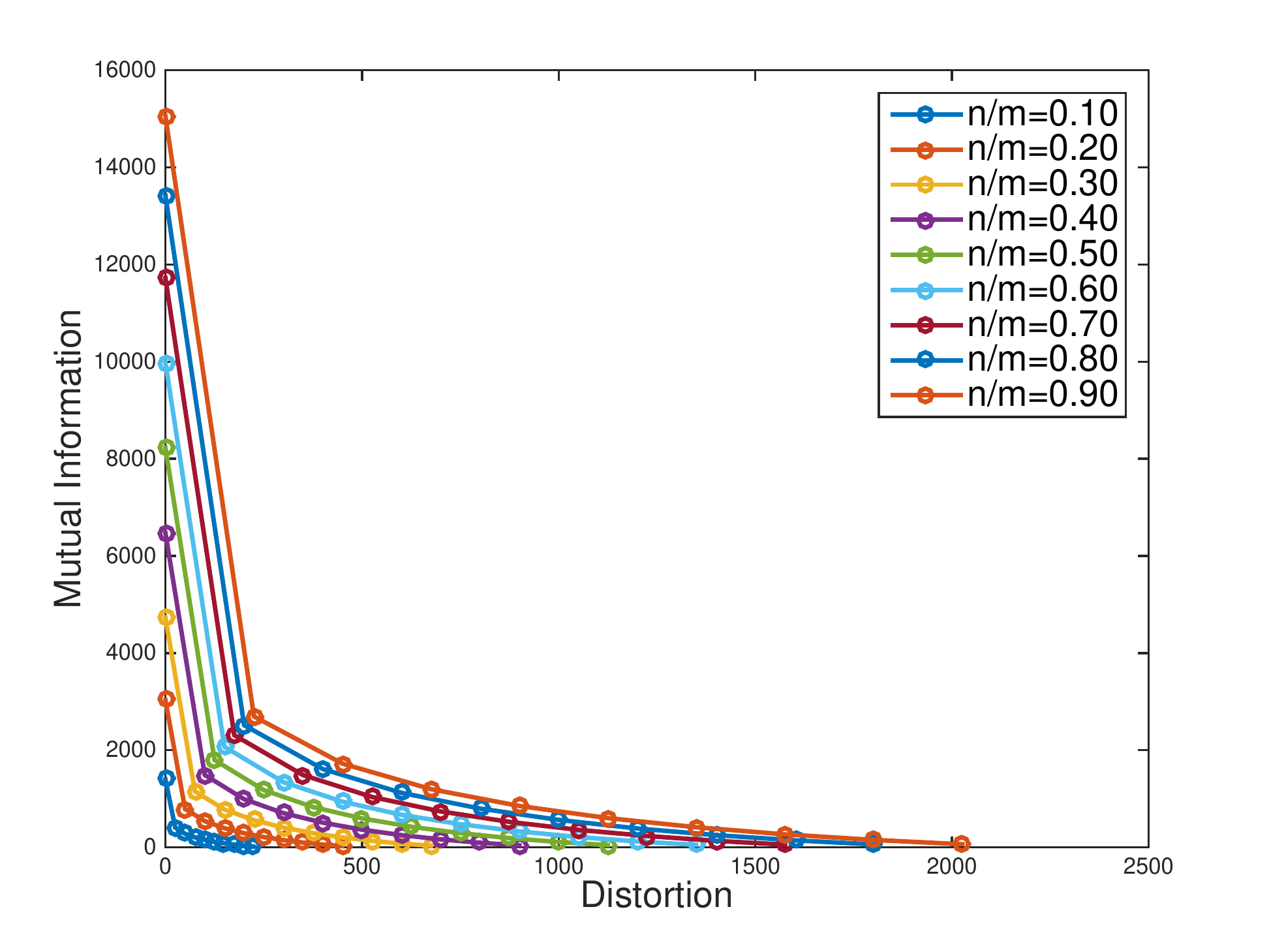}
	\caption{Dimension $m$ of $U$: 2500}
\end{subfigure}%
	\caption{Minimum mutual information under different distortion budgets in Gaussian multiple-copies case: $n<m$.}\label{Fig:MultipleCopiesGaussian}
\end{figure}

\begin{figure}[!htb]
	\centering
	\begin{subfigure}[b]{0.5\textwidth}
		\centering
		\includegraphics[width=\linewidth]{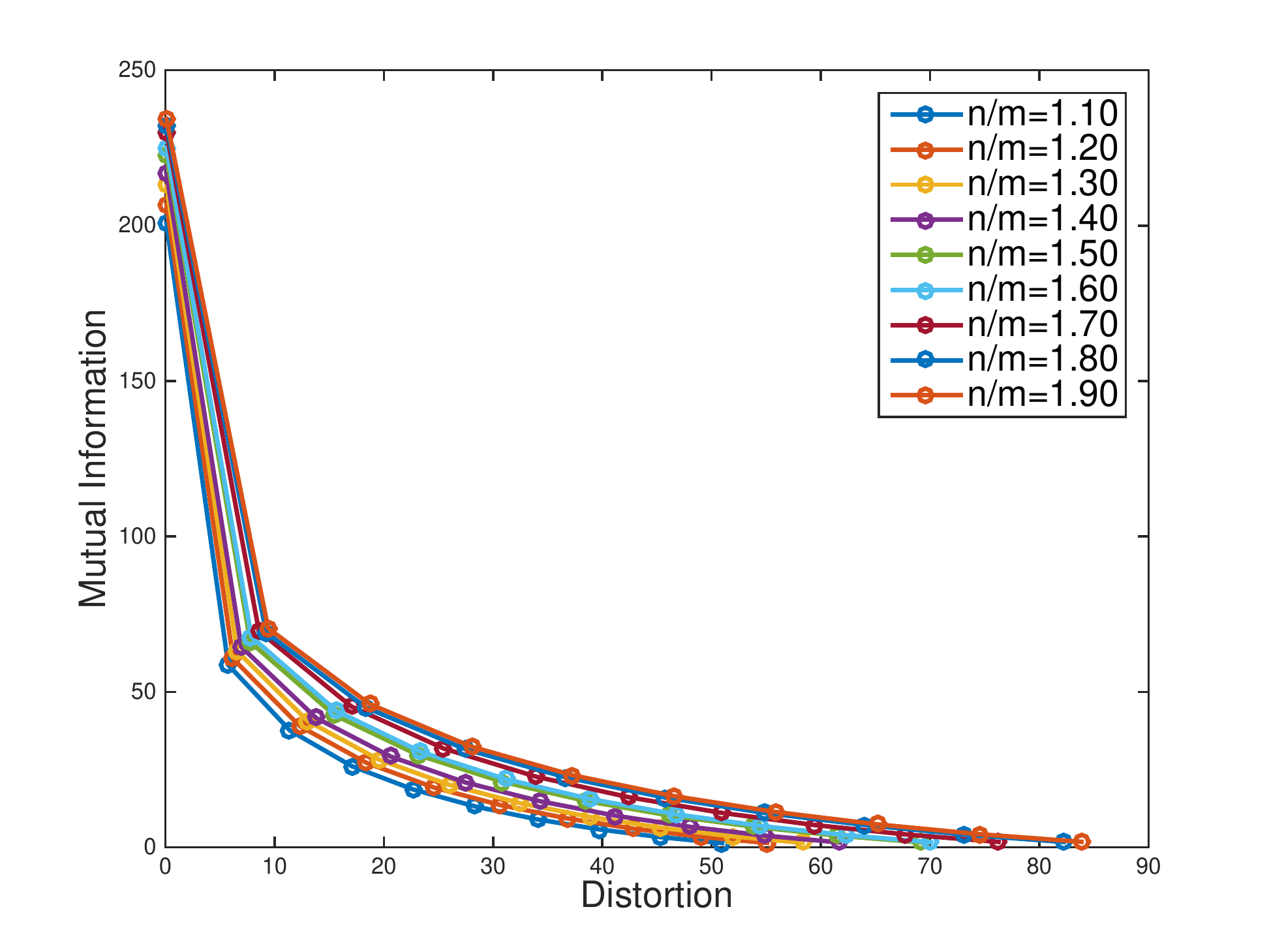}
		\caption{Dimension $m$ of $U$: 50}
	\end{subfigure}%
	~
	\begin{subfigure}[b]{0.5\textwidth}
		\centering
		\includegraphics[width=\linewidth]{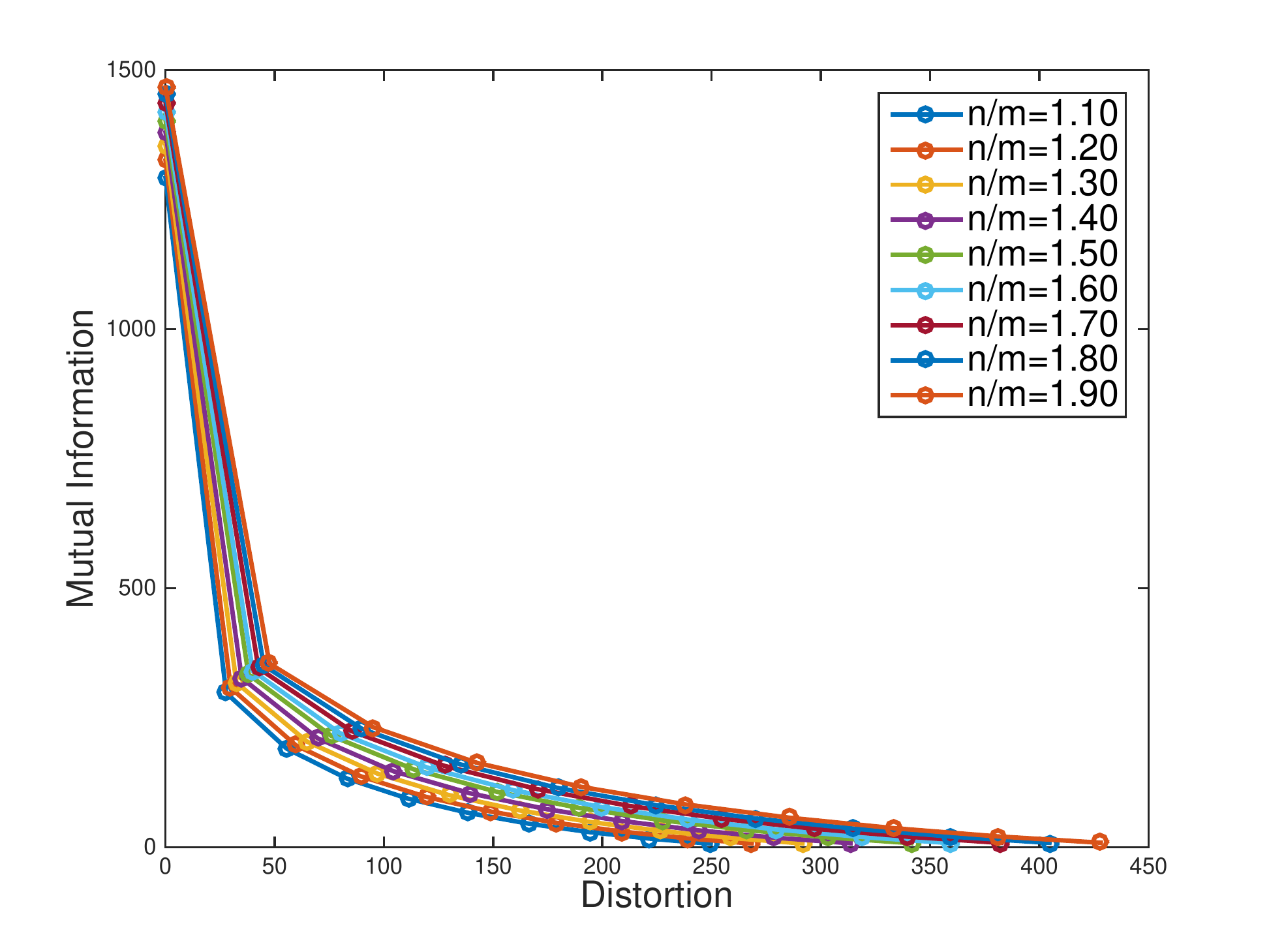}
		\caption{Dimension $m$ of $U$: 250}
	\end{subfigure}
	\begin{subfigure}[b]{0.5\textwidth}
	\centering
	\includegraphics[width=\linewidth]{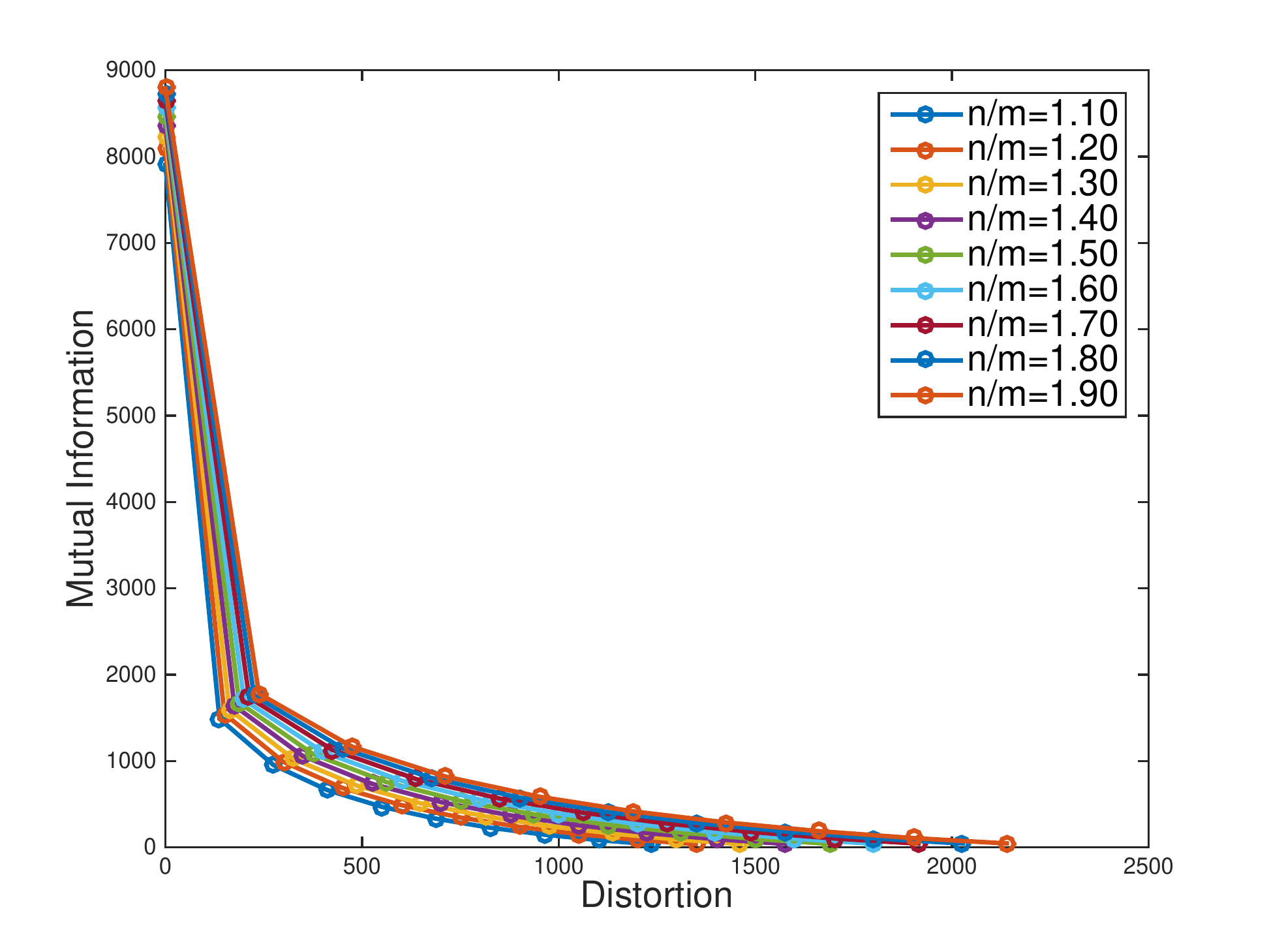}
	\caption{Dimension $m$ of $U$: 1250}
\end{subfigure}%
~
\begin{subfigure}[b]{0.5\textwidth}
	\centering
	\includegraphics[width=\linewidth]{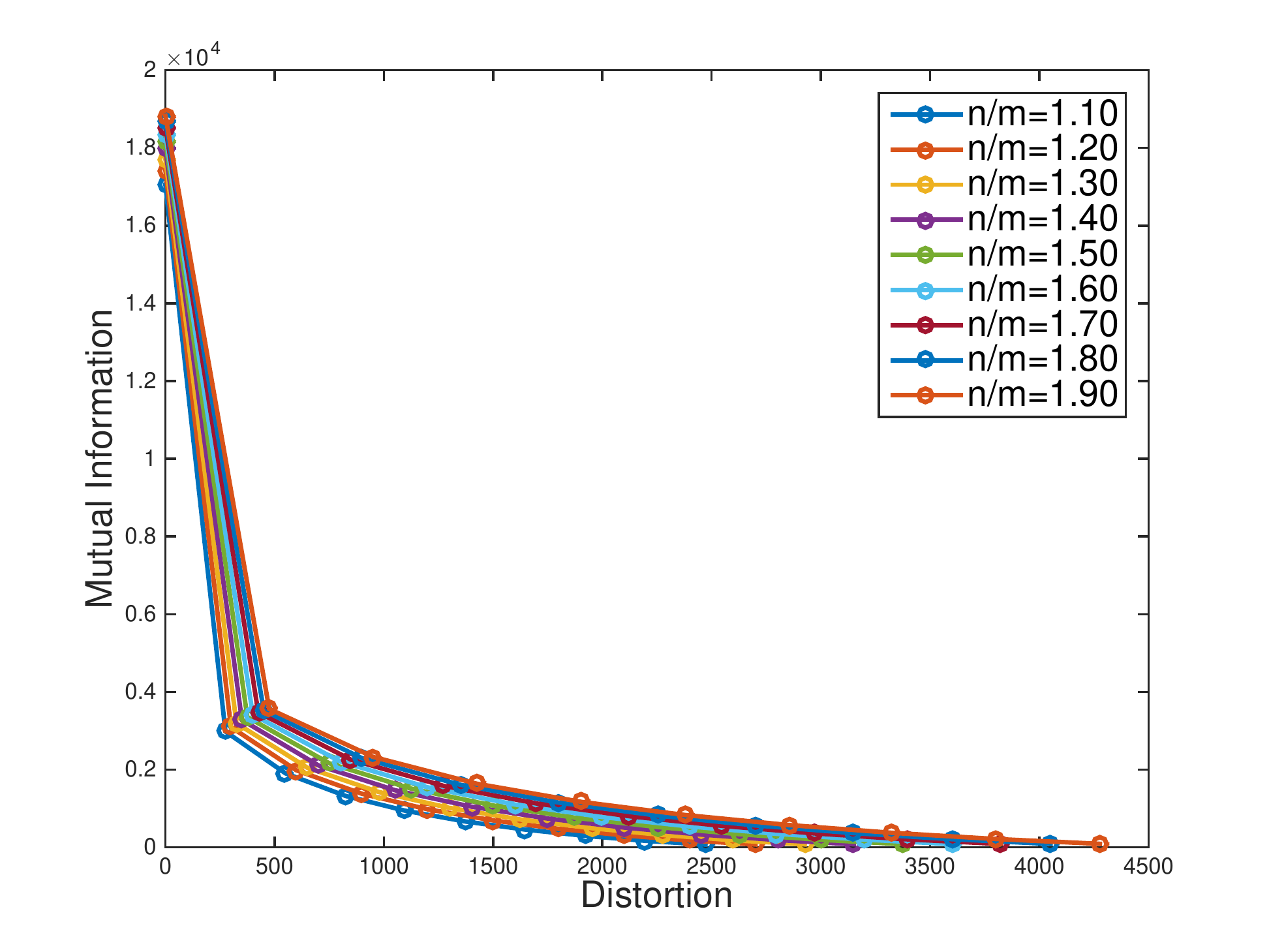}
	\caption{Dimension $m$ of $U$: 2500}
\end{subfigure}%
	\caption{Minimum mutual information under different distortion budgets in Gaussian multiple-copies case: $n\geq m$.}\label{Fig:MultipleCopiesGaussianTall}
\end{figure}

From the results, we can see that as the distortion increases up to the sum of the squared singular values of the projection matrix, the mutual information can go to zero. It also shows that for a given distortion $D$, when the dimensionality of the space to which the source $U\in\mbR^m$ is projected increases, the mutual information will also increase, meaning that an adversarial attack on the mutual information is more difficult to achieve. Besides, this improved robustness is more obvious for fat projection matrix with $n<m$. For example, in the case where $m=50$ and $D=2.5$, and the matrix is fat, increasing $n/m$ from 0.1 to 0.9 gives an increase in mutual information from about 6 to about 120, i.e., 1900\% improvement. While in the case where $m=50$ and $D=2.5$, as $n/m$ increases from 1.1 to 1.9, the mutual information increases from about 150 to 200, i.e., 33\% improvement. However, no matter what type the matrix is, the mutual information increases monotonically with respect to the increase of $n$, which provides empirical evidence for our analysis in Section \ref{Sec:GaussianWithProjection}.




We now present empirical results for minimizing mutual information \eqref{Defn:BSCAdversarialAttackExplicit} in the binary symmetric channel case. For each source distribution $a,b,c,d$, we minimize the mutual information $I(U;Y)$ in \eqref{Eq:BSCMutualInfo} with respect to conditional distribution $p(E|U,X)$ where $Y=X+E$ and the addition is modular over 2. Due to the small scale of the problem, we can adopt a grid search method to find the optimal distribution $p_1,p_2,p_3,p_4$, i.e., constructing a 4D cube $[0,1]^4$ and drawing 100 equally spacing lines along each dimension to obtain a gridded cube which has $10^8$ points. We first perform distortion check to remove those points which may violate the distortion constraint, and then find the optimal solution from the rest points whose indices are the optimal distribution. We consider different source distributions, e.g., $[a,b,c,d]=[0.45,0.05,0.05,0.45]$, $[0.4,0.1,0.1,0.4]$, $[0.3,0.1,0.1,0.3]$, and $[0.25,0.25,0.25,0.25]$. For each source distribution, we find the minimal mutual information for given distortion budget $D$. The results are shown in Figure \ref{Fig:BSCMIvsDistortion}.

\begin{figure}[!htb]
	\centering
	\begin{subfigure}[b]{0.5\textwidth}
		\centering
		\includegraphics[width=\linewidth]{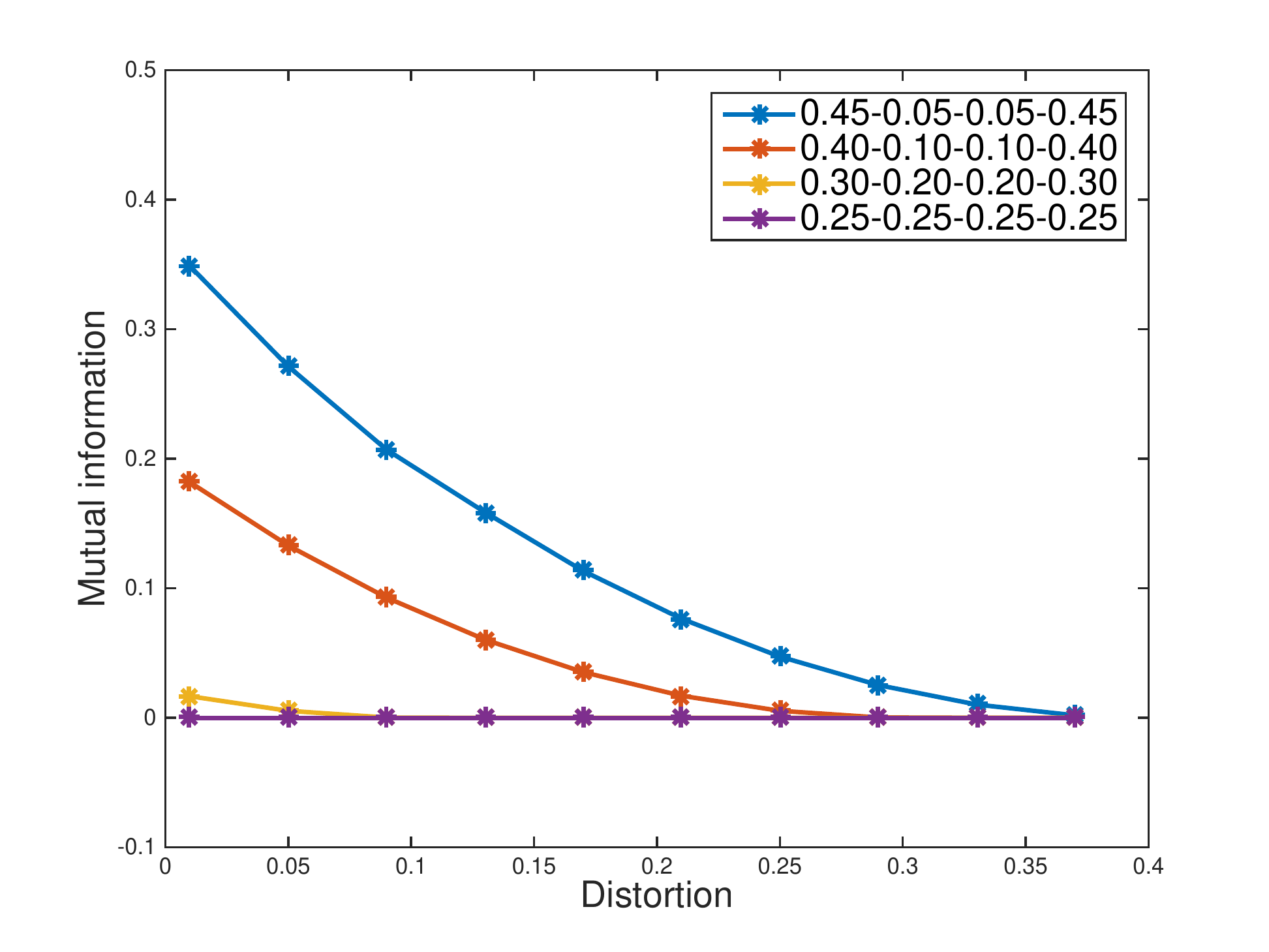}
		\caption{}\label{Fig:Equal}
	\end{subfigure}%
	~
	\begin{subfigure}[b]{0.5\textwidth}
		\centering
		\includegraphics[width=\linewidth]{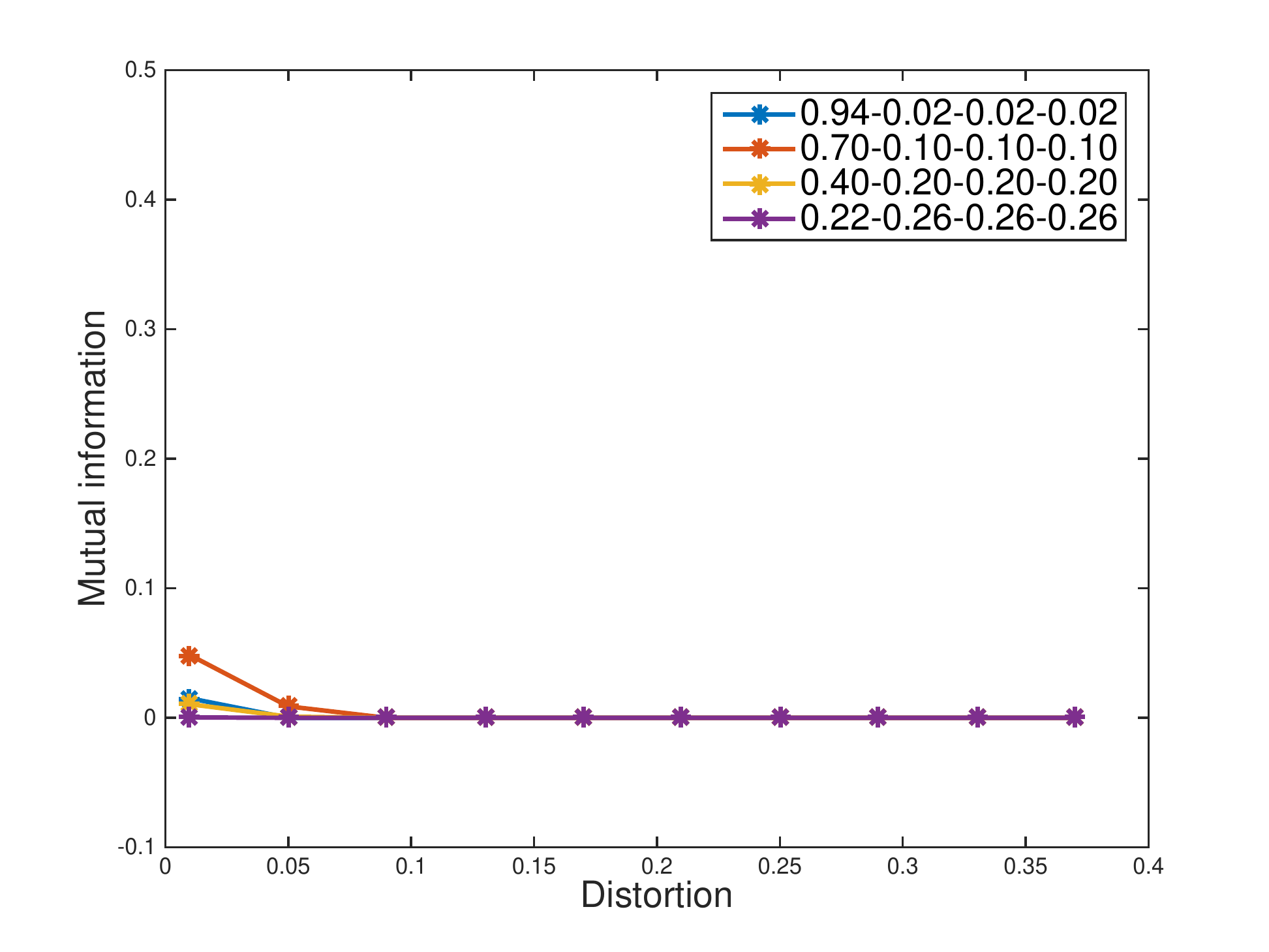}
		\caption{}\label{Fig:Concen1}
	\end{subfigure}
	\begin{subfigure}[b]{0.5\textwidth}
	\centering
	\includegraphics[width=\linewidth]{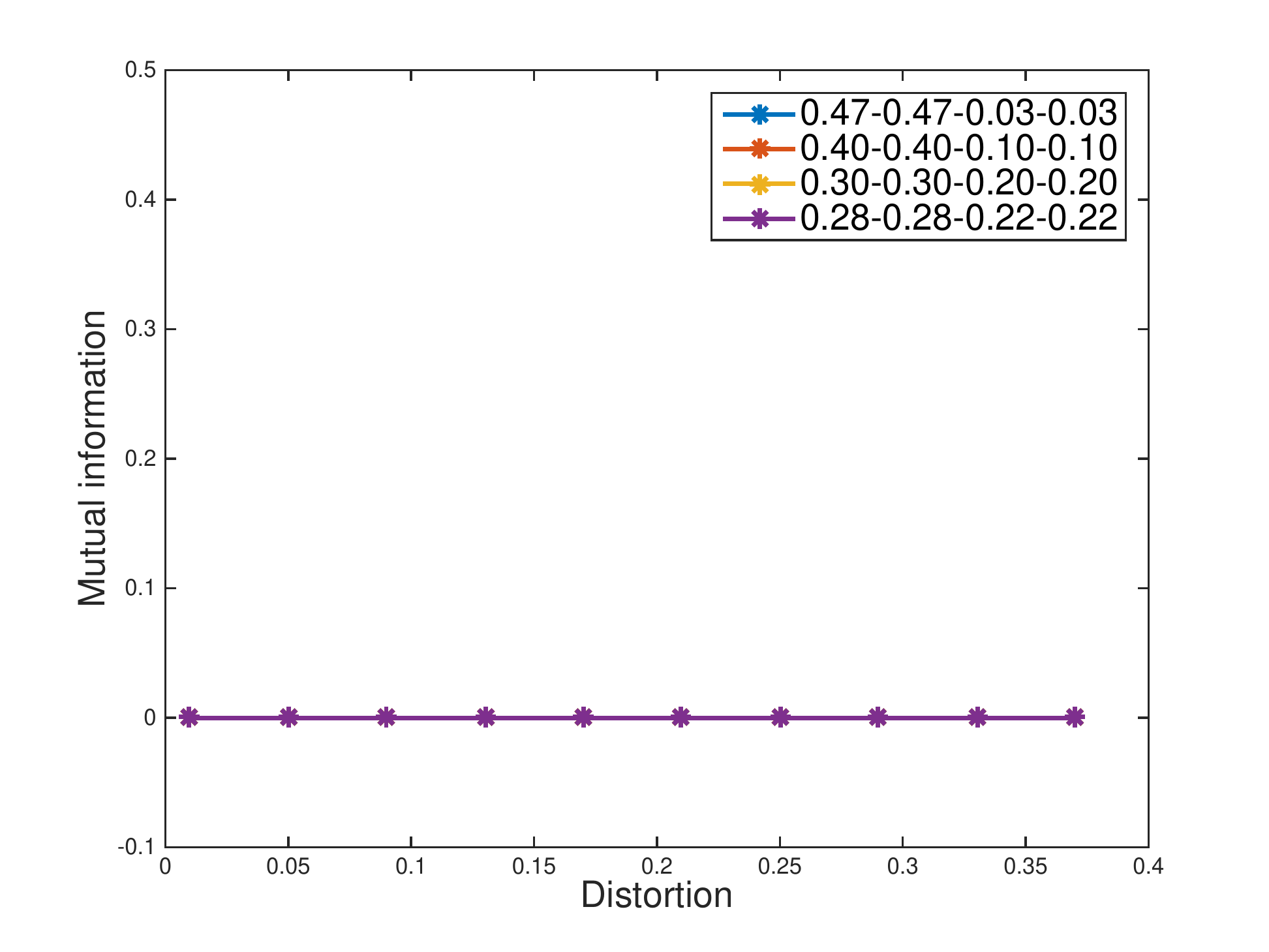}
	\caption{}\label{Fig:Concen2}
\end{subfigure}%
	\caption{Minimum mutual information under different distortion budgets: $a-b-c-d$ is the joint distribution of $(U,X)$.}\label{Fig:BSCMIvsDistortion}
\end{figure}

From the results, we can see that when the source joint distribution of $(U,X)$ is highly concentrated or uniform, we need very small distortion to reduce the mutual information to be 0. For example, in Figure \ref{Fig:Concen1}, when $\mbP(U=0,X=0)=0.94$, we only need about 0.05 distortion to set the mutual information to be 0. In Figure \ref{Fig:Concen2}, when $\mbP(X=0|U=0)+\mbP(X=1|U=0)=0.94$, there is almost no mutual information  between $U$ and $X+E$ even for distortion close to 0. In Figure \ref{Fig:Equal}, when the source joint distribution is uniform, the mutual information is zero even for very small distortion. In the general situation, increasing the distortion can indeed reduce the mutual information, e.g., the case where the source joint distribution is $a=0.45, b=0.05, c=0.05, d=0.45$ in Figure \ref{Fig:Equal}.


\bibliography{02Ref_MI-Based_Attack}
\bibliographystyle{unsrt}
\end{document}